\DeclarePairedDelimiter\abs{\lvert}{\rvert}%
\DeclarePairedDelimiter\norm{\lVert}{\rVert}%
\let\oldabs\abs
\def\abs{\@ifstar{\oldabs}{\oldabs*}}
\let\oldnorm\norm
\def\norm{\@ifstar{\oldnorm}{\oldnorm*}}
\newtheorem{theorem}{Theorem}
\newtheorem{corollary}{Corollary}[theorem]
\newtheorem{lemma}[theorem]{Lemma}
\title{The Global Anchor Method for Quantifying Linguistic Shifts and Domain Adaptation}
\author{
  Zi Yin\\
  Department of Electrical Engineering\\
  Stanford University\\
%   Stanford, CA 94305 \\
  \texttt{s09600974@gmail.com} \\
  \And
  Vin Sachidananda\\
  Department of Electrical Engineering\\
  Stanford University\\
%   Stanford, CA 94305\\
  \texttt{vsachi@stanford.edu} \\
  \And
  Balaji Prabhakar\\
  Department of Electrical Engineering and \\ Department of Computer Science\\
  Stanford University\\
%   Stanford, CA 94305 \\
  \texttt{balaji@stanford.edu}
}
\begin{document}
% \nipsfinalcopy is no longer used

\maketitle

\begin{abstract}
Language is dynamic, constantly evolving and adapting with respect to time, domain or topic. The adaptability of language is an active research area, where researchers discover social, cultural and domain-specific changes in language using distributional tools such as word embeddings. In this paper, we introduce the global anchor method for detecting corpus-level language shifts. We show both theoretically and empirically that the global anchor method is equivalent to the alignment method, a widely-used method for comparing word embeddings, in terms of detecting corpus-level language shifts. Despite their equivalence in terms of detection abilities, we demonstrate that the global anchor method is superior in terms of applicability as it can compare embeddings of different dimensionalities. Furthermore, the global anchor method has implementation and parallelization advantages. We show that the global anchor method reveals fine structures in the evolution of language and domain adaptation. When combined with the graph Laplacian technique, the global anchor method recovers the evolution trajectory and domain clustering of disparate text corpora.
\end{abstract}

\section{Introduction}
Linguistic variations are commonly observed among text corpora from different communities or time periods 
\citep{hamiltonclj,hamilton2016diachronic}.  Domain adaptation seeks to quantify the degree to which language 
varies in distinct corpora, such as text from different time periods or academic communities such as computer 
science and physics.  This adaptation can be performed either at a word-level--to determine if a particular 
word's semantics are different in the two corpora, or at the corpus-level--to determine the similarity of 
language usage in the two corpora.  Applications of these methods include identifying how words or phrases 
differ in meaning in different corpora or how well text-based models trained on one corpus can be transferred to other settings. In this paper, we focus on corpus-level adaptation methods which quantify the structural similarity of two vector space embeddings each learned on a separate corpus.

Consider a motivating example of training conversational intent and entity classifiers for computer software diagnosis. While many pre-trained word embeddings are available for such types of natural language problems, most of these embeddings are trained on general corpora such as news collections or Wikipedia. As previously mentioned, linguistic shifts can result in semantic differences between the domain on which the embeddings were trained and the domain in which the embeddings are being used. Empirically, such variations can significantly affect the performance of models using embeddings not trained on the target domain, especially when training data is sparse. As a result, it is important, both practically and theoretically, to quantify the domain-dissimilarity in target and source domains as well as study the root cause of this phenomena - language variations in time and domain.

Current distributional approaches for corpus-level adaptation are alignment-based. 
Consider two corpora $\mathcal E$ and $\mathcal F$ with corresponding vector embedding matrices $E$ and 
$F \in \mathbb{R}^{n \times d}$, where $d$ is the dimension of the embeddings and $n$ is the size of the 
common vocabulary.  
%That is, the common vocabulary consists of $n$ words and each word is represented as a vector in $\mathbb{R}^d$.  
Using the observation that vector embeddings are equivalent up to a unitary transformation 
\citep{hamilton2016diachronic, hamilton2017inductive,smith2017offline}, alignment-based approaches find a 
unitary operator $Q^* = \min_{Q\in O(d)}\|E-FQ\|_F$, where $O(d)$ is the group of $d\times d$ unitary 
matrices and $\| ~ \|_F$ is the Frobenius norm.  The shift in the meaning of an individual word can be measured by computing the norm of the difference of the corresponding row in $E$ and 
$FQ^*$.  The difference in language usage between the corpora is then quantified as $\|E-FQ^*\|_F$. In the rest of the paper, all matrix norms will be assumed to be the Frobenius norm unless otherwise specified. 

On the other hand, anchor-based approaches \citep{HamiltonLJ16a,kutuzov2017temporal,kutuzov2017tracing} are primarily used as a local method for detecting word-level adaptations.
In the local anchor method, a set of words appearing in 
both corpora are picked as "anchors" against which the particular word is compared. If the relative position of the word's embedding to the anchors has
shifted significantly between the two embeddings, the meaning of the word is likely to be different. The anchor words are usually hand selected
to reflect word meaning shift along a specific direction. For example in \citet{bolukbasi2016man}, the authors selected gender-related anchors to detect shifts in gender bias. However, the local
nature and the need for anchors to be picked by hand or by nearest neighbor search make the local anchoring method unsuitable for detecting corpus-level shifts.

The three major contributions of our work are:
\begin{enumerate}
\item  Proposing the global anchor method, a generalization of the local anchor method for detecting corpus-level adaptation.
\item  Establishing a theoretical equivalence  of the alignment and global anchor methods in terms of detection ability of corpus-level language adaptation. 
Meanwhile, we find that the global anchor method has practical advantages in terms of implementation and applicability.
\item  Demonstrating that, when combined with spectral methods, the anchor method is capable of revealing fine details 
of language evolution and linguistic affinities between disjoint communities.
\end{enumerate}

\section{Related Work}
The study of domain adaptation of natural language, such as diachronic shifts, is an active research field, with  
word- and corpus-level adaptation constituting the two main topics.
\subsection{Word-level Adaptation}
\textbf{Non-Distributional Approaches.} Word-level adaptation methods quantify the semantic and syntactic shift of individual words in different text corpora such as those from disparate communities or time periods.  Graph-based methods \citep{dorow2003discovering} such as Markov clustering have been used to identify multiple word senses in varying contexts and are useful for resolving ambiguity related to polysemous words, words which have multiple meanings.  
Topic modeling algorithms, such as the Hierarchal Dirichlet Process (HDP) \citep{lau2012word}, have also been 
applied to learn variations in word sense usage.  
The value of word sense induction methods for understanding word-level adaptation is due to some word senses occurring more or less frequently across different domains (corpora).  For instance, consider the word ``arms'' which can either mean body parts or weapons.  A medical corpus may have a higher relative frequency of the former sense when compared to a news corpus. 
Frequency statistics, which use relative word counts, have been used to predict the rate of lexical replacement in various Indo-European languages over time \citep{pagel2007frequency, reali2010words}, where more common words are shown to evolve or be replaced at a slower rate than those less frequently used.

\textbf{Distributional Approaches.} Distributional methods for word-level shifts use second order statistics, 
or word co-occurrence distributions, to characterize semantic and syntactic shifts of individual words in different 
corpora.  Distributional methods have been used to determine whether different senses for a word have been introduced, removed, or 
split by studying differences in co-occurring words across corpora from disparate time-periods \citep{jatowt2014framework,mitra2014s}.  Vector space embedding models, such as Word2Vec \citep{mikolov2013distributed}, learn vector representations in the Euclidean space. 
%where the log of the co-occurrence probability of two words is approximately proportional to the inner product of the respective word vectors.  
After training embeddings on different corpora, such as Google Books for disjoint time periods, one can compare the nearest neighbors of a particular word in different embedding spaces to detect semantic variations 
\citep{gulordava2011distributional,kim2014temporal,hamilton2016diachronic,sagi2011tracing}.  When the nearest 
neighbors are different in these embedding spaces for a particular word, it is likely that the meaning of the word 
is different across the two corpora. The introduction of the anchoring approach extends this idea by selecting the union of a word's nearest neighbors in each embedding space as the set of anchors and is used to detect word-level linguistic shifts due to cultural factors \citep{HamiltonLJ16a}. Anchoring methods have also been used to by compare word embeddings learned from diachronic corpora such as periods of war using a supervised selection of "conflict-specific" anchor words \citep{kutuzov2017temporal,kutuzov2017tracing}.

\subsection{Corpus-level Adaptation}
In contrast to word-level adaptation, corpus-level adaptation methods are used to compute the semantic similarity of natural language corpora. Non-distributional methods such as Jensen Shannon Divergence (JSD), have been applied to count statistics and t-SNE embeddings to study the linguistic variations in the Google Books corpus over time \citep{pechenick2015characterizing, Xu2015ACE}.

Alignment-based distributional methods make use of the observation that vector space embeddings are rotation invariant and as a result are equivalent up to a unitary transformation \citep{hamilton2016diachronic,smith2017offline}. Alignment methods, which learn a unitary transform between two sets of word embeddings, use the residual loss of the alignment objective to quantify the 
linguistic dissimilarity between the corpora on which the embeddings were trained.  In the context of multi-lingual corpora, 
\citet{mikolov2013translation} finds that the alignment method works as well as neural network-based 
methods for aligning two embedding spaces trained on corpora from different languages. Furthermore, algorithms for jointly training word embeddings from diachronic corpora have been researched to discover and regularize corpus-level shifts due to temporal factors \citep{RudolphB18,yao2017discovery}. In the context of diachronic word shifts, \citet{hamilton2016diachronic} aligns word embeddings trained on diachronic corpora using the alignment method. In \citet{HamiltonLJ16a}, anchoring is proposed specifically as a word-level "local" method while alignment is used to capture corpus-level "global" shifts. Similar concepts are used in tensor-based schemes \citep{zhang2013simultaneous, zhang2014hybrid} and recommendation systems based on deep-learning \citep{jiang2018mentornet,zhou2018policy}.
%In this paper, we consider theoretical and empirical implications of the extending the anchoring method to corpus-level adaptation tasks.

\iffalse
\section{Identifying and Tracking Language Differences}
\begin{itemize}
\item Frequency methods: They uses first-order statistics from the corpora, namely the frequency of word occurrences for comparing the corpora \citep{bybee2006usage}. Based on the idea that word frequency reflects the composition of the corpus, thus serving as a characteristics of the corpus itself, it was proposed using difference is word frequencies as a metric for the discrepancy between two corpora, with frequentist \citep{pagel2007frequency} or Bayesian \citep{reali2010words} approaches. Denote the word frequency distribution for two corpora as $P$ and $Q$ respectively, \citet{koplenig2015fully,pechenick2015characterizing}
further used their Jensen-Shannon divergence $D_{JS}(P||Q)$ as the quantitative measure, and applied it to study English language evolution.

\item Distributional methods: Unlike the frequency methods, the distributional methods uses the second-order statistics of word co-occurrences to discover semantics and syntactics meaning of the words, and use the meaning differences as the indicator of corpora discrepancy.

\end{itemize}
\fi

\section{Global Anchor Method for Detecting Corpus-Level Language Shifts}
Given two corpora $\mathcal E$ and $\mathcal F$, we ask the fundamental question of how different they are in terms 
of language usage. Various factors contribute to the differences, for example, chronology or 
community variations. 
%We consider two widely-used distributional methods--the alignment and anchor methods, which operate on trained word embeddings.  
Let $E$, $F$ be two separate word embeddings trained on $\mathcal E$ and 
$\mathcal F$ and consisting of common vocabulary. As a recap, the alignment method finds an orthogonal matrix $Q^*$ which minimizes
$\|E-FQ\|$, and the residual $\|E-FQ^*\|$ is the dissimilarity between the two corpora.

We propose the global anchor method, a generalization of the local anchor method for detecting corpus level adaptation. We first introduce the
local anchor method for word-level adaptation detection, upon which our global method is constructed.
\subsection{Local Anchor Method for Word-Level Adaptation Detection}
The shift of a word's meaning can be revealed by comparing it against a set of anchor words 
\citep{kutuzov2017tracing,kutuzov2017temporal}, which is a direct result of the distributional 
hypothesis \citep{HamiltonLJ16a,harris1954distributional,firth1957synopsis}.  Specifically, let $\{1, \cdots, l \}$ be the indices of the $l$ 
anchor words, common to two different corpora.  To measure how much the meaning of a word $i$ has shifted between
the two corpora, one triangulates it against the $l$ anchors in the two embedding spaces by calculating the inner 
products of word $i$'s vector representation with those of the $l$ anchors. Since the embedding for word $i$ is the $i$-th row of the embedding matrix, this procedure produces two length-$l$ 
vectors, namely
\[(\langle E_{i,\cdot}, E_{1,\cdot}\rangle,\cdots, \langle E_{i,\cdot}, E_{l,\cdot}\rangle)\text{ and }(\langle F_{i,\cdot}, F_{1,\cdot}\rangle,\cdots, \langle F_{i,\cdot}, F_{l,\cdot}\rangle).\] 
The norm of the difference of these two vectors,

% \\sqrt{\sum_{k=1}^l(\langle E_{i,\cdot}, E_{k,\cdot}\rangle-\langle F_{i,\cdot}, F_{k,\cdot}\rangle)^2}
\[ \| \langle E_{i,\cdot}, E_{1,\cdot}\rangle,\cdots, \langle E_{i,\cdot}, E_{l,\cdot}\rangle) - (\langle F_{i,\cdot}, F_{1,\cdot}\rangle,\cdots, \langle F_{i,\cdot}, F_{l,\cdot}\rangle) \| \]
reflects the drift of word $w_i$ with respect to the $l$ anchor words. The anchors are usually selected as a set of 
pre-defined words in a supervised fashion or by a nearest neighbor search, to reflect shifts along a specific direction \citep{bolukbasi2016man,kutuzov2017temporal} or a local neighborhood \citep{HamiltonLJ16a}.
\subsection{The Global Anchor Method}
Two issues arise from the local anchor method for corpus-level adaptation, namely its local nature and the need of anchors to be hand-picked or selected using nearest neighbors. We
address them by introducing the global anchor method, a generalization of the local approach.
In the global anchor method, we use
all the words in the common vocabulary as anchors, which brings two benefits. First, human supervision is no longer needed
as anchors are no longer hand picked. Second, the anchor set is enriched so that shift detections are no longer restricted to one direction. These two benefits make the
global anchor method suitable for detecting corpus level adaptation. In the global anchor method, the expression for the corpus-level dissimilarity simplifies to:
\[\|EE^T-FF^T\|.\]
Consider the $i$-th row of $EE^T$ and $FF^T$ respectively. $(EE^T)_{i,\cdot}=(\langle E_{i,\cdot}, E_{1,\cdot}\rangle, \cdots, \langle E_{i,\cdot}, E_{n,\cdot}\rangle)$ which measures the $i$-th vector $E_i$ using all other vectors as anchors. The same is true for $(FF^T)_{i,\cdot}$. The norm of the difference, $\|(EE^T)_{i,\cdot}-(FF^T)_{i,\cdot}\|$, measures the relative shift of word $i$ across the two embeddings. If this distance is large, it is likely that the meaning of the $i$-th word is different in the two corpora. This leads to an embedding distance metric also known as the Pairwise Inner Product loss \citep{yin2018pairwise,yin2018dimensionality}.

% \begin{figure}[htb]
% \centering
% \hspace*{\fill}%
% \begin{subfigure}[b]{0.45\textwidth}
% \captionsetup{justification=centering}
% \includegraphics[width=\textwidth]{}
% \caption[] {{$(EE^T)_{i,\cdot}$}} 
% \label{subfig:anchor1}
% \end{subfigure}
% \hfill
% \begin{subfigure}[b]{0.45\textwidth}      \captionsetup{justification=centering}     

% \includegraphics[width=\textwidth]{}
% \caption[] {{ $(FF^T)_{i,\cdot}$}}    
% \label{subfig:anchor2}
% \end{subfigure}
% \hspace*{\fill}%
% \caption{Geometric Interpretation of $(EE^T)_{i,\cdot}-(FF^T)_{i,\cdot}$}
% \label{fig:gam}
% \end{figure}

\section{The Alignment and Global Anchor Methods: Equivalent Detection of Linguistic Shifts} \label{sec:equivalence}
Both the alignment and global anchor methods provide metrics for corpus dissimilarity. We prove in this section that the metrics which the two methods produce 
are equivalent. The proof is based on the isotropy observation of vector 
embeddings \citep{arora2015rand} and projection geometry. 
Recall from real analysis \citep{Rudin}, that two metrics $d_1$ and $d_2$ are 
equivalent if there exist positive $c_1$ and $c_2$ such that:
\[c_1 d_1(x,y)\le d_2(x,y)\le c_2 d_1(x,y),~ \forall x,y.\]

\subsection{The Isotropy of Word Vectors} \label{subsec:isotropy}
%The derivation in Section \ref{sec:equivalence} requires that the columns of the embedding matrices $E$ and $F$ are orthonormal.
We show that the columns of embedding matrices are approximately orthonormal, which arises naturally from the isotropy of word embeddings \citep{arora2015rand}. The isotropy requires the distribution of the vectors to be uniform along all directions. This implies
$E_{w}/\|E_{w}\|$
follows a uniform distribution on a sphere, which is equivalent in distribution to the case when $E_w$ has i.i.d., zero-mean normal entries \citep{marsaglia1972choosing}. Under this assumption, we invoke a result by \citet{bai2008limit}:
\begin{theorem}
Suppose the entries of $X\in\mathbb R^{n\times d}$, $d/n=p\in(0,1)$, are random i.i.d. with zero mean, unit variance, and finite 4$^{th}$ moment. Let $\lambda_{\min}$ and $\lambda_{\max}$ be the smallest and largest singular values of $X^TX/n$, respectively. Then:
\[\lim_{n\rightarrow\infty}\lambda_{\min}\overset{a.c.}{=}(1-\sqrt{p})^2,~ 
\lim_{n\rightarrow\infty}\lambda_{\max}\overset{a.c.}{=}(1+\sqrt{p})^2\]
\end{theorem}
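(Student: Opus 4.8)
The plan is to recognize this statement as the Bai--Yin theorem on the extreme eigenvalues of a sample covariance matrix, and to prove it by combining convergence of the bulk spectrum (the Marchenko--Pastur law) with separate ``no-outlier'' arguments at each spectral edge. First I would reduce to the case of bounded entries: by a standard truncation-and-centering argument, replace $X_{ij}$ by $X_{ij}\mathbf 1\{|X_{ij}|\le C\}$ minus its mean, and argue that, under the finite fourth-moment hypothesis, this perturbation shifts every singular value by a negligible amount almost surely, via Weyl's eigenvalue perturbation inequality together with Borel--Cantelli. This preprocessing is exactly what makes the fourth-moment condition usable and lets me assume all moments of the entries are finite in the sequel.

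For the bulk, I would invoke the Marchenko--Pastur theorem: the empirical spectral distribution of $X^TX/n$ converges almost surely to the Marchenko--Pastur law with parameter $p$, whose support is exactly $[(1-\sqrt p)^2,(1+\sqrt p)^2]$. This immediately yields the ``easy'' halves of the claim, namely $\limsup_n \lambda_{\min}\le (1-\sqrt p)^2$ and $\liminf_n\lambda_{\max}\ge(1+\sqrt p)^2$, since eigenvalues must accumulate at the edges of the limiting support.

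The substance is the reverse inequalities --- showing that no eigenvalue escapes the support. For the upper edge I would use the moment (trace) method: since $\lambda_{\max}^{k}\le \operatorname{tr}\big((X^TX/n)^{k}\big)\le d\,\lambda_{\max}^{k}$, it suffices to control $\mathbb E\big[\operatorname{tr}((X^TX/n)^{k})\big]$ along a sequence $k=k_n\to\infty$ growing slowly (e.g. $k_n\asymp \log n$). Expanding the trace as a sum over closed walks on the bipartite row--column graph, the leading contribution comes from ``double-tree'' walks counted by Narayana/Catalan-type numbers, which reproduce the Marchenko--Pastur moments $\sim (1+\sqrt p)^{2k}$, while the finite fourth moment (after truncation, all moments) bounds the subleading graph contributions. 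A variance estimate on the trace together with Borel--Cantelli then upgrades convergence in expectation to $\limsup_n\lambda_{\max}\le(1+\sqrt p)^2$ almost surely.

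The hard part --- and the main obstacle --- is the lower edge $\lambda_{\min}\to(1-\sqrt p)^2$, because the smallest eigenvalue is invisible to traces of positive powers and small eigenvalues do not propagate through the moment method. Here I would exploit that $p<1$ makes $X$ a tall matrix, so $X^TX/n$ is generically invertible, and reformulate the target as $\lambda_{\min}=\min_{\|v\|=1}\|Xv\|^2/n$. The strategy is a ``no small singular value'' argument: control $\|Xv\|^2/n$ uniformly over the sphere via an $\varepsilon$-net combined with entrywise concentration to obtain $\liminf_n\lambda_{\min}\ge(1-\sqrt p-\varepsilon)^2$, and then sharpen the constant to the exact edge using the more delicate resolvent/smallest-singular-value estimates of Bai--Yin. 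Pinning the constant to $(1-\sqrt p)^2$ rather than merely a bound bounded away from $0$, and doing so almost surely, is the genuinely technical step, and the reason this result is quoted here rather than reproved in full.
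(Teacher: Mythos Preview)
The paper does not prove this theorem at all: it is simply quoted as a result of \citet{bai2008limit} (see Section~\ref{subsec:isotropy}, ``we invoke a result by \citet{bai2008limit}''), used as a black box to justify that the singular values of an isotropic embedding matrix are tightly clustered. So there is no ``paper's own proof'' to compare against; the statement is treated as a known fact from random matrix theory.

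Your sketch is a reasonable high-level outline of how the Bai--Yin theorem is actually proved in the literature --- truncation to reduce to bounded entries, Marchenko--Pastur for the easy halves, the trace/moment method for the top edge, and a separate (harder) argument for the bottom edge. One caveat on the lower edge: the original Bai--Yin argument does not really go via an $\varepsilon$-net on the sphere (that route, without further ideas, typically only gives $\lambda_{\min}$ bounded away from zero, not the sharp constant). Instead the sharp lower-edge constant is obtained by relating the extreme eigenvalues of $X^TX/n$ to those of an associated Hermitian (Wigner-type) matrix and reusing the moment machinery, or via careful Stieltjes-transform/resolvent analysis. You allude to this when you defer to ``the more delicate \ldots\ estimates of Bai--Yin,'' but as written the $\varepsilon$-net step is a red herring for getting the exact $(1-\sqrt p)^2$. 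For the purposes of this paper, though, all of this is overkill: the authors only need the citation.
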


This shows that the largest and smallest singular values for the embedding matrix, under the i.i.d. assumption, are asymptotically $\sqrt{n}\sigma(1-\sqrt{d/n})$ and $\sqrt{n}\sigma(1+\sqrt{d/n})$ respectively. Further, notice the dimensionality $d$, usually in the order of hundreds, is much smaller than the vocabulary size $n$, which can be tens of thousands up to millions \citep{mikolov2013distributed}. This leads to the result that the singular values of the embedding matrix should be tightly clustered, which is empirically verified by \citet{arora2015rand}. In other words, the columns of $E$ and $F$ are close to orthonormal.

\iffalse
Figure \ref{fig:orthogonal} visually shows that the actual embedding matrices do behave as expected, by noting that $E^TE$ is close to a diagonal matrix with clustered diagonal values.

\begin{figure}[htb]
\centering
\hspace*{\fill}%
\begin{subfigure}[b]{0.45\textwidth}
\captionsetup{justification=centering}
\includegraphics[width=\textwidth]{}
\caption[] {{\small $E^TE$ for embedding matrix $E$ trained on Google n-gram 1900}} 
\label{subfig:orthogonality_1900}
\end{subfigure}
\hfill
\begin{subfigure}[b]{0.45\textwidth}      \captionsetup{justification=centering}       \includegraphics[width=\textwidth]{}
\caption[] {{\small $E^TE$ for embedding matrix $E$ trained on Google n-gram 1990}}    
\label{subfig:orthogonality_1990}
\end{subfigure}
\hspace*{\fill}%
\caption{$E^T E$ Shows Columns of Embeddings Are Close to Orthogonal}
\label{fig:orthogonal}
\end{figure}
\fi

\subsection{The Equivalence of the Global Anchor Method and the Alignment Method}
The orthonormality of the columns of $E$ and $F$ means they can be viewed as the basis for the subspaces they span. 
Lemma \ref{lemma:1} is a classical result regarding the principal angles \citep{galantai2006jordan} between 
subspaces.  Using the lemma, we prove Theorems \ref{thm:alignment} and \ref{thm:anchoring}, and 
Corollary \ref{cor:equivalence}.
%, thus establishing the equivalence between the global anchor and alignment methods.
\begin{lemma}\label{lemma:1}
Suppose $E\in \mathbb{R}^{n\times d}$, $F\in \mathbb{R}^{n\times d}$ are two matrices with orthonormal 
columns. Then:
\begin{enumerate}
\item SVD($E^TF$)$=UCV^T$, where $C_i=\cos(\theta_i)$ is the cosine of the $i^{th}$ 
principal angle between subspaces spanned by the columns of $E$ and $F$.
\item SVD($E_\perp^T F$)$=\tilde USV^T$, where $S_i=\sin(\theta_i)$ is the sine of the 
$i^{th}$ principal angle between subspaces spanned by the columns of $E$ and $F$, where $E_\perp\in\mathbb R^{n\times(n-d)}$ is an orthogonal basis for $E$'s null space.
\end{enumerate}
\end{lemma}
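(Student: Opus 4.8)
The plan is to prove the two claims in sequence, obtaining the second from the first via an orthogonal-complement identity. Throughout I identify the columns of $E$ and $F$ with orthonormal bases of the two $d$-dimensional subspaces $\mathcal U = \mathrm{col}(E)$ and $\mathcal V = \mathrm{col}(F)$ inside $\mathbb R^n$, so that $E^TE=F^TF=I_d$.

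For the first claim I would start from the variational definition of the principal angles, $\cos\theta_1 = \max\{\langle u,v\rangle : u\in\mathcal U,\, v\in\mathcal V,\, \|u\|=\|v\|=1\}$, with each later angle defined by the same maximization restricted to the orthogonal complements of the previously chosen directions. Writing $u=Ea$ and $v=Fb$ and using orthonormality of the columns (so $\|u\|=\|a\|$ and $\|v\|=\|b\|$), every inner product becomes $\langle u,v\rangle = a^T(E^TF)b$. The recursive maximization of $a^T(E^TF)b$ over unit vectors with successive orthogonality constraints is precisely the Courant--Fischer (Ky Fan) characterization of the singular values of $E^TF$. Hence $\cos\theta_i=\sigma_i(E^TF)$, i.e. writing $\mathrm{SVD}(E^TF)=UCV^T$ gives $C_i=\cos\theta_i$, which is the first claim; this is the classical principal-angle identity and I would lean on the cited literature for the technical connection between the geometric and algebraic recursions.

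For the second claim the key observation is that $[\,E\ \ E_\perp\,]$ is an $n\times n$ orthogonal matrix, so $EE^T+E_\perp E_\perp^T=I_n$. Multiplying on the left by $F^T$ and on the right by $F$, and using $F^TF=I_d$, gives
\[ I_d=(E^TF)^T(E^TF)+(E_\perp^TF)^T(E_\perp^TF). \]
Substituting the SVD from the first claim, $(E^TF)^T(E^TF)=VC^2V^T$, yields
\[ (E_\perp^TF)^T(E_\perp^TF)=V(I-C^2)V^T=VS^2V^T,\qquad S_i^2=1-\cos^2\theta_i=\sin^2\theta_i. \]
Since $\theta_i\in[0,\pi/2]$ we have $S_i=\sin\theta_i\ge 0$, and reading off this eigendecomposition shows that $E_\perp^TF$ has right singular vectors $V$ and singular values $\sin\theta_i$; completing to a full factorization supplies the left factor $\tilde U$, giving $\mathrm{SVD}(E_\perp^TF)=\tilde U S V^T$.

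The step I expect to require the most care is ensuring that the \emph{same} right-singular-vector matrix $V$ appears in both decompositions, since this is exactly what makes the sine/cosine pairing of the two statements meaningful. It holds because $(E^TF)^T(E^TF)$ and $(E_\perp^TF)^T(E_\perp^TF)$ sum to $I_d$, hence commute and are simultaneously diagonalizable, so any $V$ diagonalizing the first automatically diagonalizes the second. The one genuine subtlety is repeated principal angles, where $V$ is not unique: there I would fix a single orthonormal eigenbasis $V$ of $(E^TF)^T(E^TF)$ and note that the displayed identity forces it to diagonalize the complementary product as well, so a consistent choice of $V$ is always available.
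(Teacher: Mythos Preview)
Your argument is correct. Note, however, that the paper does not actually prove Lemma~\ref{lemma:1}: it is stated there as a classical result about principal angles with a citation to the literature, and is then used as a black box in the proofs of Theorems~\ref{thm:alignment} and~\ref{thm:anchoring}. So there is no ``paper's own proof'' to compare against; what you have written is a clean self-contained justification of the cited fact. Your derivation of part~2 from part~1 via the complementary-projector identity $EE^T+E_\perp E_\perp^T=I_n$ is the standard route, and your remark that the two Gram matrices sum to $I_d$ and hence share a common eigenbasis is exactly the right way to secure the same right factor $V$ in both decompositions---which is indeed how the lemma is used downstream in the paper.
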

Let $\Theta=(\theta_1,\cdots,\theta_d)$ be the vector of principal angles between the subspaces 
spanned by $E$ and $F$, and all operations on $\Theta$, such as $\sin$ and raising to a power, be applied element-wise.
% $\sin(\Theta)$ to denote 
%$(\sin(\theta_1),\cdots,\sin(\theta_d))$.  
\begin{theorem} \label{thm:alignment}
The metric for the alignment method, $\|E-FQ^*\|$, equals $2\|\sin(\Theta/2)\|$.
\end{theorem}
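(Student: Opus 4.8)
The plan is to recognize the alignment objective as an orthogonal Procrustes problem and reduce it to a trace maximization whose answer is governed by the singular values of $E^T F$, which Lemma~\ref{lemma:1} identifies as the cosines of the principal angles. First I would expand the squared objective. Since $E$ and $F$ have orthonormal columns, $E^T E = F^T F = I_d$, so $\|E\|^2 = d$ and $\|FQ\|^2 = \mathrm{tr}(Q^T F^T F Q) = \mathrm{tr}(Q^T Q) = d$ for every orthogonal $Q$. The two cross terms are transposes of one another and hence equal, giving
\[
\|E - FQ\|^2 = \|E\|^2 + \|FQ\|^2 - 2\,\mathrm{tr}(Q^T F^T E) = 2d - 2\,\mathrm{tr}(Q^T F^T E).
\]
Thus minimizing $\|E - FQ\|$ over $O(d)$ is equivalent to maximizing $\mathrm{tr}(Q^T F^T E)$.

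Next I would solve this maximization using the SVD. By Lemma~\ref{lemma:1} we may write $E^T F = U C V^T$ with $C$ diagonal and $C_i = \cos\theta_i \ge 0$, so $F^T E = V C U^T$ and $\mathrm{tr}(Q^T F^T E) = \mathrm{tr}(C\, U^T Q^T V)$. As $Q$ ranges over $O(d)$, the matrix $W = U^T Q^T V$ ranges over all of $O(d)$ as well; since every diagonal entry of an orthogonal matrix satisfies $|W_{ii}| \le 1$ and each $C_i \ge 0$, we get $\mathrm{tr}(CW) = \sum_i C_i W_{ii} \le \sum_i C_i$, with equality at $W = I$, i.e. at $Q^* = V U^T$. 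Hence the optimal value is $\mathrm{tr}(C) = \sum_{i=1}^d \cos\theta_i$, and $\|E - FQ^*\|^2 = 2d - 2\sum_i \cos\theta_i$. This step --- justifying the closed-form Procrustes solution, equivalently that $\max_{Q \in O(d)} \mathrm{tr}(Q^T M)$ equals the sum of the singular values of $M$ --- is the crux of the argument; it rests on von Neumann's trace inequality, or on the elementary diagonal-entry bound used above, and I would take care to confirm that the maximizer $Q^* = VU^T$ is genuinely orthogonal.

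Finally I would convert the result to half-angle form. Applying the identity $1 - \cos\theta_i = 2\sin^2(\theta_i/2)$ termwise,
\[
\|E - FQ^*\|^2 = 2\sum_{i=1}^d (1 - \cos\theta_i) = 4\sum_{i=1}^d \sin^2(\theta_i/2) = 4\,\|\sin(\Theta/2)\|^2,
\]
so taking square roots yields $\|E - FQ^*\| = 2\|\sin(\Theta/2)\|$, which is the claimed identity. The only nontrivial ingredients are Lemma~\ref{lemma:1}, which supplies the cosine singular values, and the Procrustes optimization of the preceding paragraph; everything else is routine manipulation of traces and a trigonometric identity.
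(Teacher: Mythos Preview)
Your proof is correct and in fact more direct than the paper's. The paper proceeds by expanding the full matrix $(E-FQ^*)(E-FQ^*)^T$ after a change of basis into $[E~E_\perp]$, invoking both parts of Lemma~\ref{lemma:1} (the cosines from $E^TF$ and the sines from $E_\perp^TF$), and then exhibits an explicit factorization showing that the singular values of $E-FQ^*$ are exactly $2\sin(\theta_i/2)$; the Frobenius norm follows. You bypass the change of basis and the $E_\perp$ block entirely by working with the squared Frobenius norm as a trace, which needs only part~1 of the lemma. You also supply a self-contained derivation of $Q^*=VU^T$ via the orthogonal-matrix diagonal-entry bound, whereas the paper simply cites Sch\"onemann for the Procrustes optimizer. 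The paper's route yields slightly more information---the individual singular values of $E-FQ^*$, not just their $\ell_2$ norm---but for the theorem as stated your argument suffices and is cleaner.
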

\begin{proof}
Note that 
\[(E-FQ)(E-FQ)^T=EE^T+FF^T-EQ^T F^T - FQE^T\]
We perform a change of basis into the columns of $[E~ E_\perp]$,
\begin{equation}
\begin{aligned}
&(E-FQ)(E-FQ)^T\\
=&
\begin{bmatrix}
E & E_\perp
\end{bmatrix}
\left(
\begin{bmatrix}
    I & 0 \\
    0 & 0
\end{bmatrix}+
\begin{bmatrix}
E^T FF^T E & E^T FF^T E_\perp \\
E_\perp^T FF^T E & E_\perp^T FF^T E_\perp
\end{bmatrix}-
\begin{bmatrix}
E^T FQ &0 \\
E_\perp^T FQ & 0
\end{bmatrix}-
\begin{bmatrix}
Q^T F^T E & Q^T F^T E_\perp \\
0 & 0
\end{bmatrix}
\right)
\begin{bmatrix}
E^T \\ E_\perp^T
\end{bmatrix}\\
=&
\begin{bmatrix}
E & E_\perp
\end{bmatrix}
\left(
\begin{bmatrix}
    I & 0 \\
    0 & 0
\end{bmatrix}+
\begin{bmatrix}
UC^2U^T & UCS\tilde U^T \\
\tilde U CSU^T & \tilde U S^2 \tilde U^T
\end{bmatrix}-
\begin{bmatrix}
UCV^T Q &0 \\
\tilde U SV^T Q & 0
\end{bmatrix}-
\begin{bmatrix}
Q^T VCU^T & Q^T VS\tilde U^T \\
0 & 0
\end{bmatrix}
\right)
\begin{bmatrix}
E^T \\ E_\perp^T
\end{bmatrix}\\
\end{aligned}
\label{eq:1}
\end{equation}
Notice that the $Q^*$ minimizing $\|E-FQ^*\|$ equals $Q^*=V U^T$ \citep{schonemann1966generalized}. Plug in $Q^*$ to \eqref{eq:1} and we get
\begin{equation}
\begin{aligned}
(E-FQ)(E-FQ)^T=&
\begin{bmatrix}
E & E_\perp
\end{bmatrix}
\begin{bmatrix}
    U & 0 \\
    0 & \tilde U
\end{bmatrix}
\begin{bmatrix}
    I+C^2-2C & CS-S \\
    CS-S & S^2
\end{bmatrix}
\begin{bmatrix}
    U & 0 \\
    0 & \tilde U
\end{bmatrix}^T
\begin{bmatrix}
E^T \\ E_\perp^T
\end{bmatrix}\\
=&
\begin{bmatrix}
E & E_\perp
\end{bmatrix}
\begin{bmatrix}
    U & 0 \\
    0 & \tilde U
\end{bmatrix}
\begin{bmatrix}
    (I-C)^2 & -S(1-C) \\
    -S(1-C) & S^2
\end{bmatrix}
\begin{bmatrix}
    U & 0 \\
    0 & \tilde U
\end{bmatrix}^T
\begin{bmatrix}
E^T \\ E_\perp^T
\end{bmatrix}\\
\end{aligned}
\label{eq:2}
\end{equation}

By applying the trigonometric identities $1-\cos(\theta)=2\sin^2(\theta/2)$ and $\sin(\theta)=2\sin(\theta/2)\cos(\theta/2)$ to equation \eqref{eq:2}, we have
\[(I-C)^2=4\sin^4(\Theta/2),~ -S(1-C)=-4\sin^3(\Theta/2)\cos(\Theta/2),~ S^2=4\sin^2(\Theta/2)\cos^2(\Theta/2)\]
Plug in the quantities into \eqref{eq:2},
\[=\begin{bmatrix}
E & E_\perp
\end{bmatrix}
\begin{bmatrix}
    U & 0 \\
    0 & \tilde U
\end{bmatrix}
\begin{bmatrix}
    \sin(\Theta/2) \\
    -\cos(\Theta/2)
\end{bmatrix}
4\sin^2(\Theta/2)
\begin{bmatrix}
    \sin(\Theta/2) \\ -\cos(\Theta/2)
\end{bmatrix}^T
\begin{bmatrix}
    U & 0 \\
    0 & \tilde U
\end{bmatrix}^T
\begin{bmatrix}
E^T \\ E_\perp^T
\end{bmatrix}\]
As a result, the singular values of $E-FQ^*$ are $2\sin(\Theta/2)$. So $\|E-FQ^*\|=2\|\sin(\Theta/2)\|$
\end{proof}

\begin{theorem} \label{thm:anchoring}
The metric for the global anchor method, $\|EE^T-FF^T\|$, equals $\sqrt{2}\|\sin\Theta\|$.
\end{theorem}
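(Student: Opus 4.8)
The plan is to evaluate $\|EE^T-FF^T\|$ by squaring it. Since $M:=EE^T-FF^T$ is symmetric, its squared Frobenius norm is $\|M\|^2=\operatorname{tr}(M^2)$, which expands to
\[
\operatorname{tr}\big((EE^T)^2\big)-2\operatorname{tr}(EE^TFF^T)+\operatorname{tr}\big((FF^T)^2\big).
\]
The two ``square'' terms are immediate: because $E$ has orthonormal columns we have $E^TE=I_d$, so $EE^T$ is an orthogonal projection with $(EE^T)^2=EE^T$; hence $\operatorname{tr}\big((EE^T)^2\big)=\operatorname{tr}(EE^T)=\operatorname{tr}(E^TE)=d$, and identically $\operatorname{tr}\big((FF^T)^2\big)=d$.

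The only genuinely informative piece is the cross term. Using the cyclic invariance of the trace, $\operatorname{tr}(EE^TFF^T)=\operatorname{tr}(E^TFF^TE)=\|E^TF\|^2$. This is exactly where Lemma \ref{lemma:1} enters: part (1) gives $E^TF=UCV^T$ with $U,V$ orthogonal and $C=\cos\Theta$ the diagonal matrix of cosines of the principal angles, so by unitary invariance of the Frobenius norm $\|E^TF\|^2=\|C\|^2=\sum_{i=1}^d\cos^2\theta_i$. Assembling the three pieces,
\[
\|EE^T-FF^T\|^2=2d-2\sum_{i=1}^d\cos^2\theta_i=2\sum_{i=1}^d(1-\cos^2\theta_i)=2\sum_{i=1}^d\sin^2\theta_i=2\|\sin\Theta\|^2,
\]
and taking square roots yields $\|EE^T-FF^T\|=\sqrt2\,\|\sin\Theta\|$.

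Honestly, the computation has no real obstacle once the projection identity $(EE^T)^2=EE^T$ is spotted; the single substantive input is Lemma \ref{lemma:1}(1), which converts $\|E^TF\|^2$ into $\sum_i\cos^2\theta_i$. An alternative route, fully parallel to the proof of Theorem \ref{thm:alignment}, is to change basis into the columns of $[E~E_\perp]$ and observe that $EE^T-FF^T$ is orthogonally similar to the block matrix $\left[\begin{smallmatrix} S^2 & -CS\\ -CS & -S^2\end{smallmatrix}\right]$ (using $I-UC^2U^T=US^2U^T$), whose squared Frobenius norm is $\sum_i\big(2S_i^4+2C_i^2S_i^2\big)=2\sum_i S_i^2$. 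This reproduces the same constant and makes the comparison with the alignment method's $2\|\sin(\Theta/2)\|$ transparent, but I would present the trace computation as the main argument since it is shorter and avoids bookkeeping of the off-diagonal blocks.
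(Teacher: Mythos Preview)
Your proof is correct. The paper, however, does not use the trace expansion you present as your main argument; instead it follows exactly the route you describe as the ``alternative'': it changes basis into $[E~E_\perp]$, uses both parts of Lemma~\ref{lemma:1} to obtain the block form
\[
\begin{bmatrix} S^2 & -CS\\ -CS & -S^2 \end{bmatrix},
\]
and then factors this as $\operatorname{diag}(S,S)$ times the orthogonal matrix $\left[\begin{smallmatrix} S & -C\\ -C & -S\end{smallmatrix}\right]$ to read off the norm $\sqrt{2}\|S\|$. Your trace computation is shorter and more elementary, needing only part~(1) of Lemma~\ref{lemma:1} and the projection identity $(EE^T)^2=EE^T$; the paper's change-of-basis argument is a bit heavier but keeps the proof structurally parallel to that of Theorem~\ref{thm:alignment}, which makes the comparison driving Corollary~\ref{cor:equivalence} more transparent at the level of singular values rather than just norms.
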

\begin{proof}
First, notice $[E~ E_\perp]$ forms a unitary matrix of $\mathbb R^n$. Also note the Frobenius norm is unitary-invariant. The above observations allow us to perform a change of basis:
\begin{align*}
\norm{EE^T-FF^T}=&
\norm{\begin{bmatrix}
E^T \\ E_\perp^T
\end{bmatrix}
(EE^T-FF^T)
\begin{bmatrix}
E & E_\perp
\end{bmatrix}}
=\norm{
\begin{bmatrix}
    I & 0 \\
    0 & 0
\end{bmatrix}-
\begin{bmatrix}
E^T FF^T E & E^T FF^T E_\perp \\
E_\perp^T FF^T E & E_\perp^T FF^T E_\perp
\end{bmatrix}
}\\
=&\norm{
\begin{bmatrix}
    I & 0 \\
    0 & 0
\end{bmatrix}-
\begin{bmatrix}
UC^2U^T & UCS\tilde U^T \\
\tilde UCS U^T & \tilde US^2 \tilde U^T
\end{bmatrix}
}
=\norm{
\begin{bmatrix}
    U & 0 \\
    0 & \tilde U
\end{bmatrix}
\begin{bmatrix}
    I-C^2 & -CS \\
    -CS & -S^2
\end{bmatrix}
\begin{bmatrix}
    U & 0 \\
    0 & \tilde U
\end{bmatrix}^T
}\\
=&\norm{
\begin{bmatrix}
    S^2 & -CS \\
    -CS & -S^2
\end{bmatrix}
}
=\norm{
\begin{bmatrix}
S & 0\\
0 & S
\end{bmatrix}
\begin{bmatrix}
    S & -C \\
    -C & -S
\end{bmatrix}
}
=\norm{
\begin{bmatrix}
S & 0\\
0 & S
\end{bmatrix}}\\
=&\sqrt{2}\norm{S}=\sqrt{2}\norm{\sin\Theta}
\end{align*}
\end{proof}

\begin{corollary} \label{cor:equivalence}
$\|E-FQ^*\|\le\|EE^T-FF^T\|\le \sqrt{2}\|E-FQ^*\| .$
\end{corollary}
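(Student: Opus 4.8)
The plan is to feed the closed-form expressions from Theorems~\ref{thm:alignment} and~\ref{thm:anchoring} directly into the claimed chain of inequalities, reducing the corollary to an elementary scalar comparison. Substituting $\|E-FQ^*\|=2\|\sin(\Theta/2)\|$ and $\|EE^T-FF^T\|=\sqrt2\,\|\sin\Theta\|$, the statement becomes
\[2\|\sin(\Theta/2)\|\le \sqrt2\,\|\sin\Theta\|\le 2\sqrt2\,\|\sin(\Theta/2)\|,\]
or equivalently, after squaring and dividing by $2$,
\[2\|\sin(\Theta/2)\|^2\le \|\sin\Theta\|^2\le 4\|\sin(\Theta/2)\|^2.\]
So the entire corollary reduces to sandwiching $\|\sin\Theta\|^2$ between $2\|\sin(\Theta/2)\|^2$ and $4\|\sin(\Theta/2)\|^2$.

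Next I would pass to coordinates using the double-angle identity $\sin\theta_i=2\sin(\theta_i/2)\cos(\theta_i/2)$, already invoked in the proof of Theorem~\ref{thm:alignment}. Squaring gives $\sin^2\theta_i=4\sin^2(\theta_i/2)\cos^2(\theta_i/2)$, so that
\[\|\sin\Theta\|^2=\sum_{i=1}^d 4\sin^2(\theta_i/2)\cos^2(\theta_i/2).\]
The whole argument then hinges on controlling the factor $\cos^2(\theta_i/2)$ uniformly in $i$.

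The key geometric fact --- and the only content beyond algebra --- is that the principal angles satisfy $\theta_i\in[0,\pi/2]$, since their cosines $C_i$ are the singular values of $E^TF$ and hence lie in $[0,1]$ by Lemma~\ref{lemma:1}. Consequently $\theta_i/2\in[0,\pi/4]$ and $\cos^2(\theta_i/2)\in[\tfrac12,1]$. Applying this coordinatewise bound inside the sum yields, for each $i$,
\[2\sin^2(\theta_i/2)\le 4\sin^2(\theta_i/2)\cos^2(\theta_i/2)\le 4\sin^2(\theta_i/2);\]
summing over $i$ gives exactly $2\|\sin(\Theta/2)\|^2\le\|\sin\Theta\|^2\le 4\|\sin(\Theta/2)\|^2$, which is the reduced inequality. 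Taking square roots and reinstating the $\sqrt2$ factors recovers the corollary verbatim.

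The main obstacle is conceptual rather than computational: recognizing that the orthonormality of the columns of $E$ and $F$ confines the singular values of $E^TF$ to $[0,1]$, so that the $\theta_i$ are genuine principal angles restricted to $[0,\pi/2]$, and hence $\cos^2(\theta_i/2)\ge\tfrac12$. Once that range is secured, both inequalities drop out of a termwise comparison and no further structural properties of $E$ or $F$ are required.
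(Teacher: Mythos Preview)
Your argument is correct and follows essentially the same route as the paper: substitute the closed forms from Theorems~\ref{thm:alignment} and~\ref{thm:anchoring}, then use $\sin\theta_i=2\sin(\theta_i/2)\cos(\theta_i/2)$ together with $\theta_i\in[0,\pi/2]$ to bound $\cos(\theta_i/2)\in[\sqrt2/2,1]$. The only difference is cosmetic---the paper states the scalar inequality $\sqrt2\sin(\theta/2)\le\sin\theta\le 2\sin(\theta/2)$ directly without squaring, whereas you pass through squared norms---and you are slightly more explicit in tracing the range $\theta_i\in[0,\pi/2]$ back to Lemma~\ref{lemma:1}.
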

\begin{proof}
By Theorem \ref{thm:alignment} and \ref{thm:anchoring},
$\|EE^T-FF^T\|=\sqrt{2}\|\sin\Theta\|$ and $\min_{Q\in O(d)}\|E-FQ\|=2\|\sin(\Theta/2)\|$. Finally, the corollary can be obtained since $\sqrt{2}\sin(\theta/2)\le \sin(\theta)\le 2\sin(\theta/2)$, which is a result of $\sqrt{2}\le 2\cos(\theta/2)\le 2$ for $\theta\in [0,\pi/2]$.
\end{proof}

\subsection{Validation of the Equivalence between Alignment and Global Anchor Methods}
We proved that the anchor and alignment methods are equivalent 
in detecting linguistic variations for two corpora up to at most a constant factor of $\sqrt 2/2$ under the isotropy assumption. 
To empirically verify that the equivalence holds, we conduct the following experiment. 
Let $E^{(i)}$ and $E^{(j)}$ correspond to word embeddings trained on the Google Books dataset for distinct years $i$ 
and $j$ in $\{1900,1901,\cdots,2000\}$ respectively\footnote{Detail of the dataset and training will be discussed in the Section \ref{sec:experiments}.}.  
Normalize $E^{(i)}$ and $E^{(j)}$ by their average column-wise norm, $\frac{1}{d}\sum_{k=1}^d\|E_{[\cdot,k]}\|$, so the embedding matrices have the same Frobenius norm. 
For every such pair $(i,j)$, we compute 
\[\frac{\min_{Q\in O(d)}\|E^{(i)}-E^{(j)}Q\|}{\|E^{(i)}{E^{(i)}}^T-E^{(j)}{E^{(j)}}^T\|}.\]
Our theoretical analysis showed that this number is between $\sqrt{2}/2\approx 0.707$ and 1. 
Since we evaluate for every possible pair of years, there are in total 10,000 such ratios. The 
statistics are summarized in Table \ref{table:ratio}. The empirical results indeed match the theoretical analysis. Not only are the ratios within the range $[\sqrt{2}/2,1]$, but also they are tightly clustered around 0.83 meaning that empirically the output of alignment method is approximately a constant of the global anchor method.

\captionof{table}{The Ratio of Distances Given by the Alignment and Global Anchor Methods} \label{table:ratio}
\begin{center}
\begin{tabular}{ |c|c | c| c|c|c|c|}
 \hline
 & mean & std & min. & max. & theo. min. & theo. max.\\
  \hline
Ratio & 0.826 & 0.015 & 0.774 & 0.855 & $\sqrt 2/2\approx 0.707$  & 1\\
\hline
\end{tabular}
\end{center}

\subsection{Advantages of the Global Anchor Method over the Alignment Method}
Theorems \ref{thm:alignment}, \ref{thm:anchoring} and Corollary \ref{cor:equivalence} together establish 
the equivalence of the anchor and alignment methods in identifying corpus-level language shifts using 
word embeddings; the methods differ by at most a constant factor of $\sqrt{2}$. Despite the theoretical equivalence, there are several practical differences to 
consider.  We briefly discuss some of these differences.
\begin{itemize}
\item \textbf{Applicability:} The alignment methods can be applied only to embeddings of the same dimensionality, since 
the orthogonal transformation it uses is an isometry onto the original $\mathbb R^d$ space.  On the other 
hand, the global anchor method can be used to compare embeddings of different dimensionalities. 
\item \textbf{Implementation:} The global anchor method involves only matrix multiplications, making it convenient to code. 
It is also easy to parallelize, as the matrix product and entry-wise differences are naturally parameterizable 
and a map-reduce implementation is straightforward. The alignment method, on the other hand, requires solving 
a constrained optimization problem. This problem can be solved using gradient \citep{yao2017discovery} or
SVD \citep{hamilton2016diachronic} methods.
\end{itemize}

Due to the benefits in implementation and applicability, along with the established 
equivalence of the global anchoring and alignment methods, the global anchor method should be 
preferred for quantifying corpus-level linguistic shifts. In the next section, we 
conduct experiments using the global anchor method to detect linguistic shifts in corpora 
which vary in time and domain. 

\section{Detecting Linguistic Shifts with the Global Anchor Method}\label{sec:experiments}
The global anchor method can be used to discover language evolution and domain specific language shifts. 
In the first experiment, we demonstrate that the global anchor method discovers the fine-grained 
evolutionary trajectory of language, using the Google Books N-gram data \citep{lin2012syntactic}. 
The Google Books dataset is a collection of digitized publications between 1800 and 2009, which 
makes up roughly 6\% of the total number of books ever published. The data is presented in $n$-gram 
format, where $n$ ranges from 1 to 5. We collected the $n$-gram data for English fictions between 
1900 and 2000, trained skip-gram Word2Vec models for each year, and compared the distance between 
embeddings of different years using the global anchor method.\\
%The set of anchors are chosen to be the intersection of the vocabularies of all the years. 

In our second experiment, we show that the global anchor method can be used to find community-based 
linguistic affinities of text from varying academic fields and categories on arXiv, an online 
pre-print service widely used in the fields of computer science, mathematics, and physics. We 
collected all available arXiv LaTeX files submitted to 50 different academic communities between 
January 2007 and December 2017 resulting in corpora assembled from approximately 75,000 academic 
papers, each associated with a single primary academic field and category. After parsing these 
LaTeX files into natural language, we constructed disjoint corpora and trained skip-gram Word2Vec 
models for each category. We then compute the anchor loss for each pair of categories. \\
%The set of anchors is the intersection of the vocabulary over all categories.

We conduct two more experiments on Reddit community (subreddit) comments as well as the Corpus of Historical American English (COHA); these experiments along with further analysis on word-level linguistic shifts are deferred to the Appendix due to space constraints. Our codes and datasets are publicly available on GitHub\footnote{https://github.com/ziyin-dl/global-anchor-method}.

\iffalse
\begin{figure}[htb]
\centering
\captionsetup{justification=centering}
\includegraphics[width=\textwidth]{}
\caption[] {{\small Histogram of $\frac{\min_{Q\in O(d)}\|E^{(i)}-E^{(j)}Q\|}{\|E^{(i)}{E^{(i)}}^T-E^{(j)}{E^{(j)}}^T\|}$}} 
\label{fig:ratio_hist}
\end{figure}
\fi

\iffalse
\begin{figure}[htb]
\centering
\captionsetup{justification=centering}
\includegraphics[width=\textwidth]{ratio_hist.pdf}
\caption[] {{\small Histogram of $\frac{\min_{Q\in O(d)}\|E^{(i)}-E^{(j)}Q\|}{\|E^{(i)}{E^{(i)}}^T-E^{(j)}{E^{(j)}}^T\|}$}} 
\label{fig:ratio_hist}
\end{figure}
\fi

\subsection{Language Evolution}\label{subsec:evolution}
The global anchor method can reveal the evolution of language, since it provides a quantitative metric $\|EE^T-FF^T\|$ between corpora from different years. Figure \ref{fig:embeddings} is a visualization of the anchor distance between different embeddings trained on the Google Books n-gram dataset, where the $ij^{th}$ entry is the anchor distance between $E^{(i)}$ and $E^{(j)}$. In Figure \ref{subfig:anchor_diff_10}, we grouped n-gram counts and trained embeddings for every decade, and in Figure \ref{subfig:anchor_diff_1} the embeddings are trained for every year.
\\

First, we observe that there is a banded structure. Linguistic variation increases with respect to $|i-j|$, which is expected. The banded structure was also observed by \citet{pechenick2015characterizing} who used word frequency methods instead of distributional approaches. Languages do not evolve at constant speed and the results of major events can have deep effects on the evolution of natural language. Due to the finer structure of word embeddings, compared to first-order statistics like frequencies, the anchor method captures more than just banded structure. An example is the effect of wars. In Figure \ref{subfig:anchor_diff_1}, we see that embeddings trained on years between 1940-1945 have greater anchor loss when compared to embeddings from 1920-1925 than those from 1915-1918. Figure \ref{fig:1944} demonstrates the row vector in Figure \ref{subfig:anchor_diff_1} for the year 1944.

\begin{figure}[htb]
\centering
\hspace*{\fill}%
\begin{subfigure}[b]{0.49\textwidth}
\captionsetup{justification=centering}
\includegraphics[width=\textwidth]{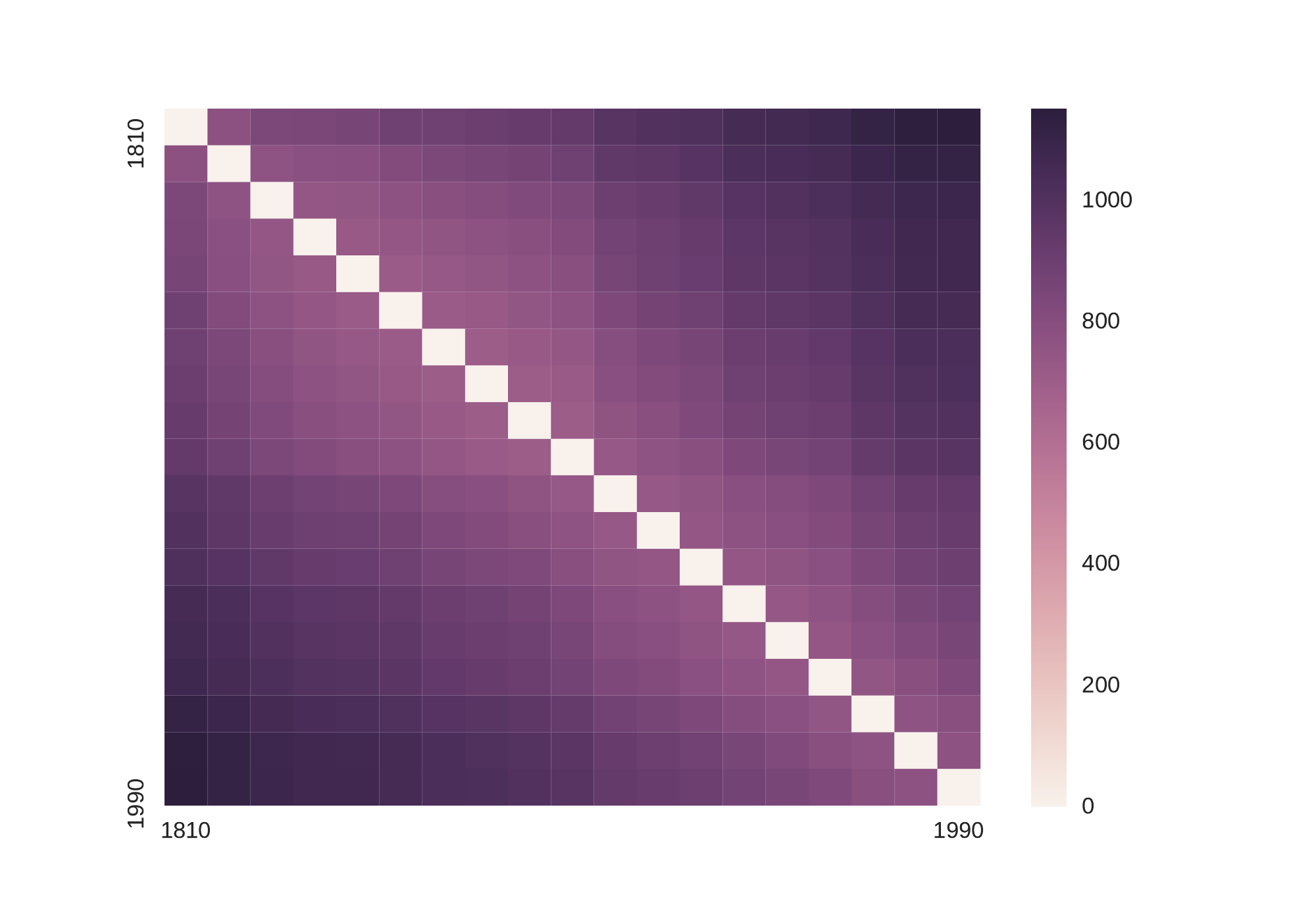}
\caption[] {{\small Anchor difference across decades}} 
\label{subfig:anchor_diff_10}
\end{subfigure}
\hfill
\begin{subfigure}[b]{0.49\textwidth}      \captionsetup{justification=centering}       \includegraphics[width=\textwidth]{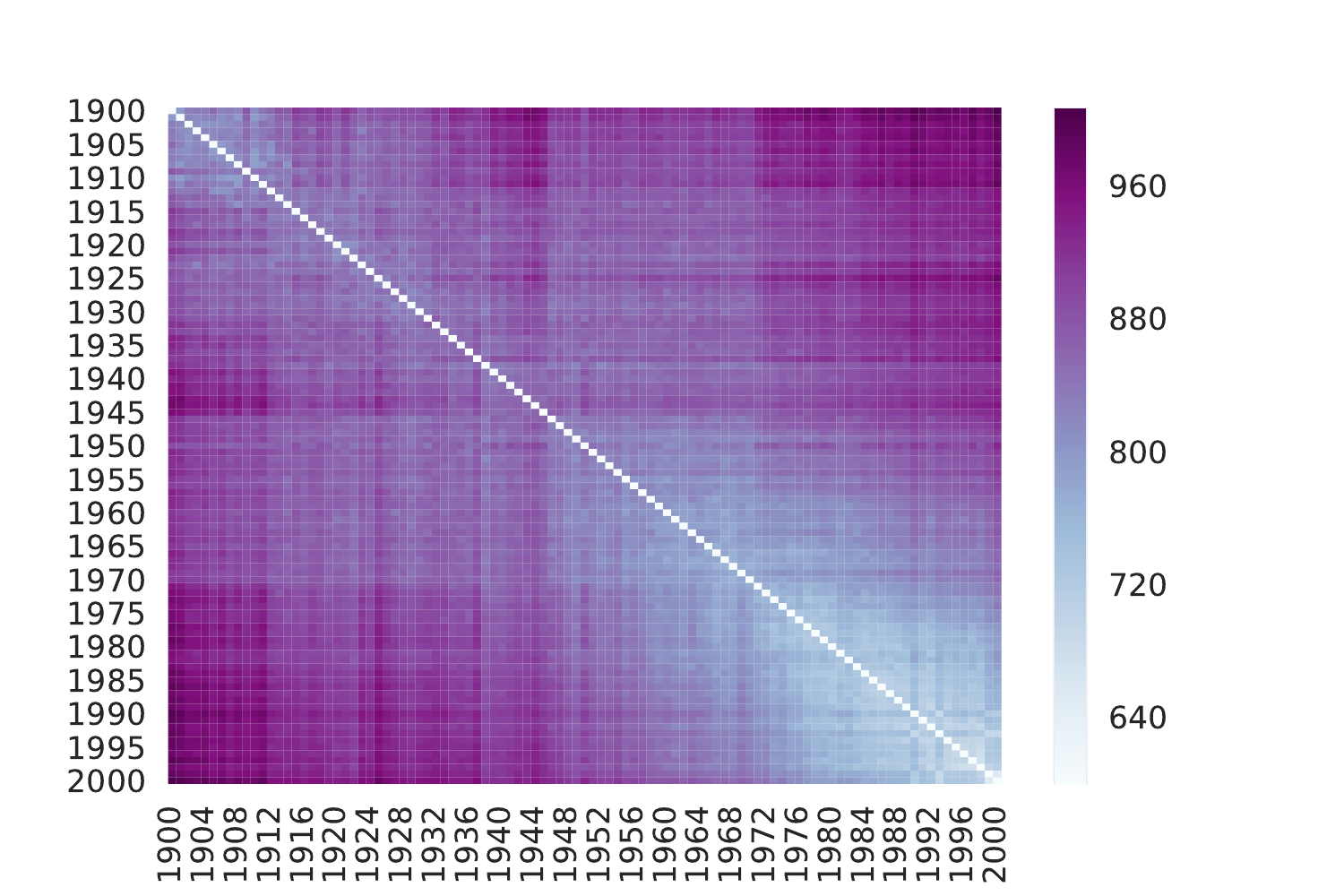}
\caption[] {{\small Anchor difference across years}}    
\label{subfig:anchor_diff_1}
\end{subfigure}
\hspace*{\fill}%
\caption{Temporal evolution of English language and the banded structure}
\label{fig:embeddings}
\end{figure}
\begin{figure}[htb]
\centering
\captionsetup{justification=centering}
\includegraphics[width=\textwidth]{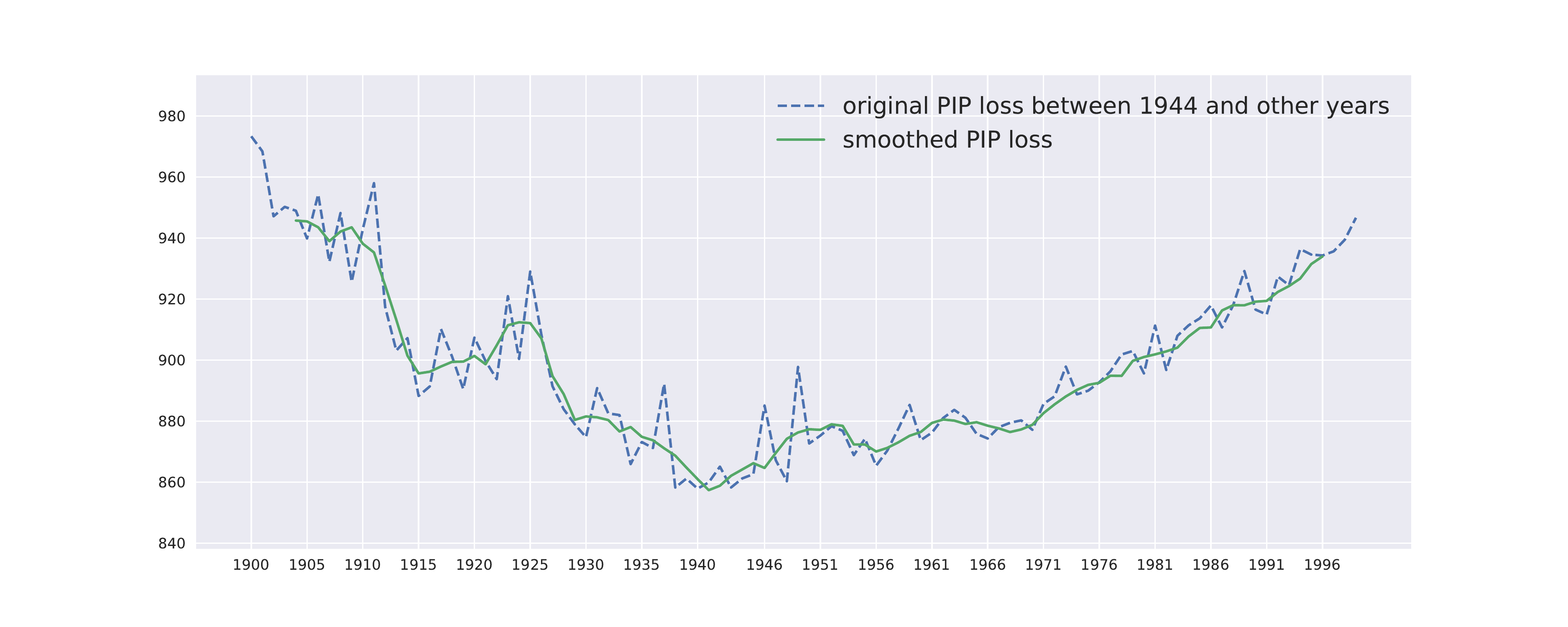}
\caption{Anchor difference for year 1944, note the dips during war times}
\label{fig:1944}
\end{figure}

In Figure \ref{fig:1944}, there is a clear upward trend of the anchor difference as one moves away from 1944. However, there is a major dip around 1915-1918 (WWI), and two minor dips around 1959 (Korean War) and 1967 (Vietnam War). This pattern is consistent across 1939-1945 (WWII) for the anchor methods, but not as clear when using frequency methods. As per the distributional hypothesis \citep{firth1957synopsis}, one should consider that frequency methods, unlike co-occurrence approaches, do not capture the semantics of words but rather the relative frequency of their usage. As discussed in \citet{pechenick2015characterizing}, frequency change of popular words (his, her, which, {\it etc.}) contribute the most to the frequency discrepancies. This, however, does not mean the two corpora are linguistically different, as these popular words may retain their meaning and could be used in the same contexts, despite frequency differences. The global anchor method is less sensitive to this type of artifact as it captures change of word meaning rather than frequency, and as a result is able to show finer structures of language shifts.

\subsection{Trajectory of Language Evolution}
As discussed in Section \ref{subsec:evolution}, the global anchor method can provide finer structure about the rate of evolution compared to frequency-based approaches. The distance matrix provided by the anchor method can further give information about the {\it direction of evolution} via the graph Laplacian technique \citep{von2007tutorial}. The graph Laplacian method looks for points in a low dimensional space where the distance between the pair $(i,j)$ reflects the corresponding entry of the anchor loss matrix. Algorithm \ref{alg:algo1} describes the procedure for obtaining Laplacian Embeddings from the anchor loss matrix.
\\

\begin{algorithm}
\caption{Laplacian Embedding for Distance Matrix}
\label{alg:algo1}
\begin{algorithmic}[1]
 \State Given a distance matrix $M$
 \State Let $S=\exp{(-\frac{1}{2\sigma^2}M)}$ be the exponentiated similarity matrix;
 \State Calculate the Laplacian
 $L = I - D^{-1/2}SD^{-1/2}, \text{ where } D=\text{diag}(d) \text{ and } d_i=\sum_j S_{ij};$
 \State Compute the singular value decomposition $UDV^T=L$;
 \State Take the last $k$ columns of $U$, $U_{\cdot, n-k:n}$, as the dimension $k$ embedding of $M$.
\end{algorithmic}
\end{algorithm}

In Figure \ref{subfig:trajectory_ngram}, we show the 2-dimensional embedding of the anchor distance matrix for Google Books n-gram (English Fiction) embeddings from year 1900 to 2000. We can see that the years follow a trajectory starting from the bottom-left and gradually ending at the top-right. There are a few noticeable deviations on this trajectory, specifically the years 1914-1918, 1938-1945 and 1981-2000. It is clear that the first two periods were major war-times, and these two deviations closely resemble each other, indicating that are driven by the same type of event. The last deviation is due to the rise of scientific literature, where a significant amount of technical terminologies (\textit{e.g.} computer) were introduced starting from the 1980s. This was identified as a major bias of Google Books dataset \citep{pechenick2015characterizing}.

\subsection{Linguistic Variation in Academic Subjects}
In Figure \ref{subfig:trajectory_arxiv}, we use the global anchor method to detect linguistic similarity of arXiv papers from different academic communities. We downloaded and parsed the LaTeX files posted on arXiv between Jan. 2007 and Dec. 2017, and trained embeddings for each academic category using text from the corresponding papers. The anchor distance matrix is deferred to the appendix due to page limits. We applied the Laplacian Embedding, Algorithm \ref{alg:algo1}, to the anchor distance matrix, and obtained spectral embeddings for different categories. It can be observed that the categories are generally clustered according to their fields; math, physics and computer science categories all forms their own clusters. Additionally, the global anchor method revealed a few exceptions which make sense at second glance:
\begin{itemize}
\item Statistical Mechanics (cond-mat.stat-mech) is closer to math and computer science categories
\item History and Overview of Mathematics (math.HO) is far away from other math categories
\item Information theory (cs.IT) is closer to math topics than other computer science categories
\end{itemize}

\begin{figure}[htb]
\centering
%\hspace*{\fill}%
\begin{subfigure}[b]{0.6\textwidth}
\captionsetup{justification=centering}
\includegraphics[width=\textwidth]{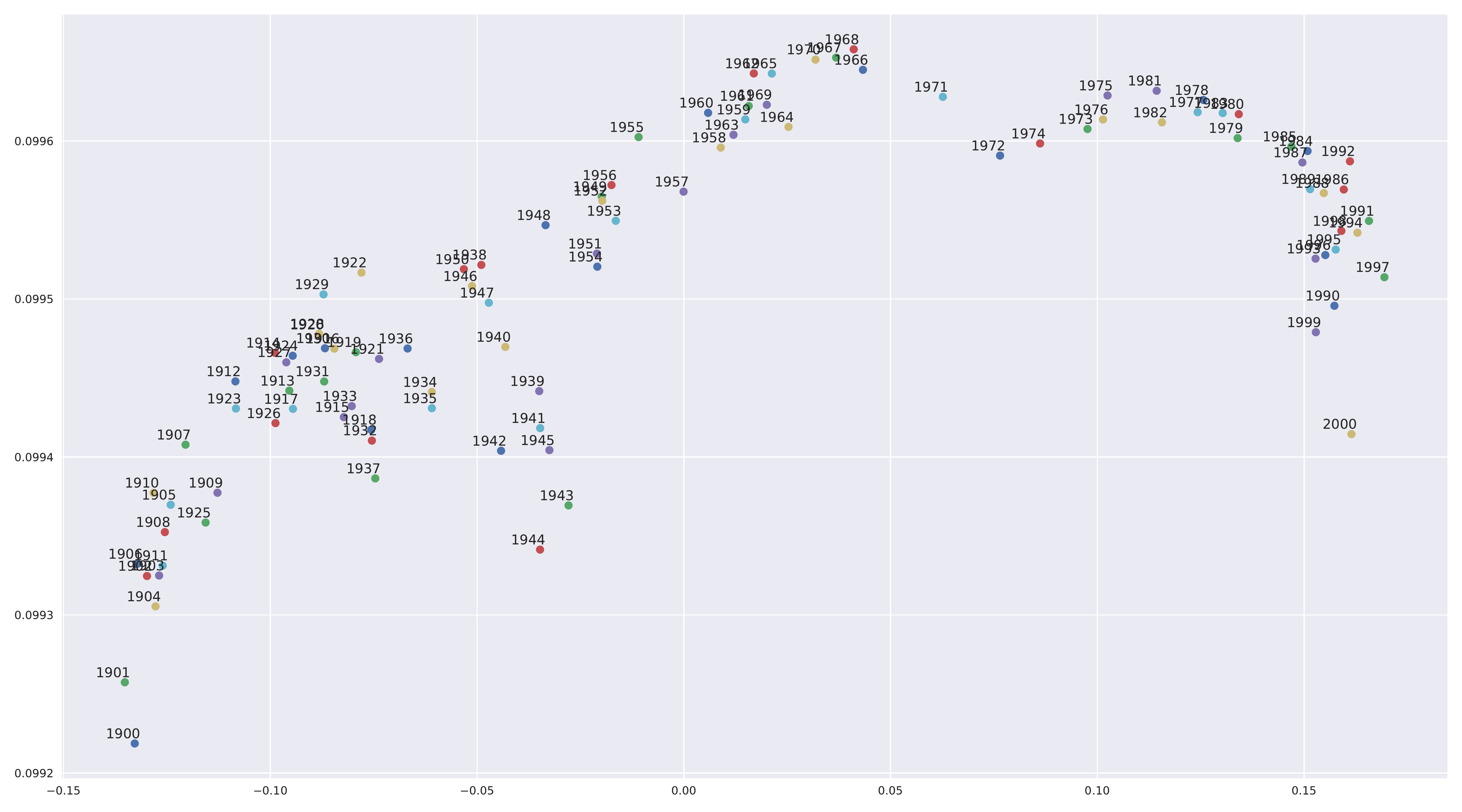}
\caption[] {{\small Anchor difference across years of N-gram corpus}} 
\label{subfig:trajectory_ngram}
\end{subfigure}
%\hfill
\begin{subfigure}[b]{0.6\textwidth}
\captionsetup{justification=centering}
\includegraphics[width=\textwidth]{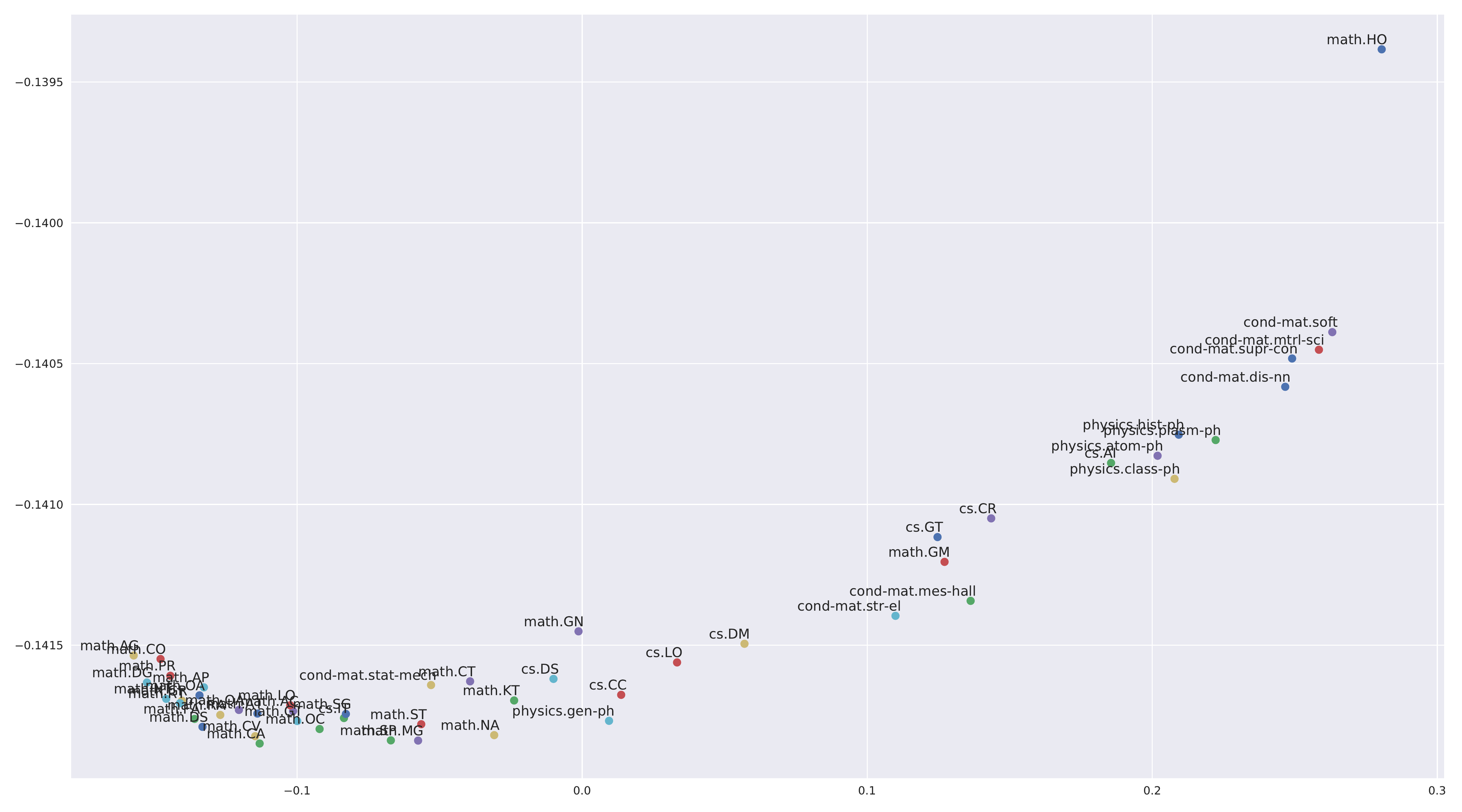}
\caption[] {{\small Anchor difference across ArXiv topics}} 
\label{subfig:trajectory_arxiv}
\end{subfigure}

\captionsetup{justification=centering}
\caption{2-D embedding of corpora reveals evolution trajectory and domain similarity}
\label{fig:trajectory}
\end{figure}

\section{Conclusion and Future Work}
In this paper, we introduced the global anchor method for detecting corpus-level linguistic shifts. We showed both theoretically and empirically that the global anchor method provides an equivalent metric to the alignment method, a widely used method for corpus-level shift detection. Meanwhile, the global anchor method excels in applicability and implementation. We demonstrated that the global anchor method can be used to capture linguistic shifts caused by time and domain. It is able to reveal finer structures compared to frequency-based approaches, such as linguistic variations caused by wars and linguistic similarities between academic communities. \\

%Although the two applications in the paper, quantifying diachronic and domain-specific linguistic shifts, are somewhat specialized, their intended purpose is to demonstrate important applications of the Global Anchor Method in comparing embedding spaces. 
We demonstrated in Section \ref{sec:experiments} important applications of the global anchor method in detecting diachronic and domain-specific linguistic shifts using word embeddings. As embedding models are foundational tools in Deep Learning, the global anchor method can be used to address the problems of Transfer Learning and Domain Adaptation, which are ubiquitous in NLP and Information Retrieval. In these fields, Transfer Learning is important as it attempts to use models learned from a source domain effectively in different target domains, potentially with much smaller amounts of data. The efficacy of model transfer depends critically on the domain dissimilarity, which is what our method quantifies. \\

While we mainly discuss corpus-level adaptation in this paper, future work includes using the anchor method to discover global trends and patterns in different corpora, which lies between corpus and word-level linguistic shifts. In particular, unsupervised methods for selecting anchors are of great interest.

% The anchoring method can provide finer structure about the rate of evolution (as discussed in Section \ref{subsec:evolution}. The PIP distance matrix provided by the anchoring method can further give the information about the {\it direction of evolution}, by constructing the trajectory of language evolution via the graph Laplacian technique \citep{von2007tutorial}. Graph Laplacian method itself can be viewed as a type of embedding; where we look for points in a Euclidean space that maintains the distance structures. The algorithm we use is describes as follows:

\section{Acknowledgements}

The authors would like to thank Professors Dan Jurafsky and Will Hamilton for their helpful comments and discussions. Additionally, we thank the anonymous reviewers for their feedback during the review process.

\bibliographystyle{plainnat}
\bibliography{example_paper}

\begin{thebibliography}{41}
\providecommand{\natexlab}[1]{#1}
\providecommand{\url}[1]{\texttt{#1}}
\expandafter\ifx\csname urlstyle\endcsname\relax
  \providecommand{\doi}[1]{doi: #1}\else
  \providecommand{\doi}{doi: \begingroup \urlstyle{rm}\Url}\fi

\bibitem[Arora et~al.(2016)Arora, Li, Liang, Ma, and Risteski]{arora2015rand}
Sanjeev Arora, Yuanzhi Li, Yingyu Liang, Tengyu Ma, and Andrej Risteski.
\newblock A latent variable model approach to pmi-based word embeddings.
\newblock \emph{Transactions of the Association for Computational Linguistics
  (TACL)}, 4:\penalty0 385--399, 2016.

\bibitem[Bai and Yin(2008)]{bai2008limit}
ZD~Bai and YQ~Yin.
\newblock Limit of the smallest eigenvalue of a large dimensional sample
  covariance matrix.
\newblock In \emph{Advances In Statistics}, pages 108--127. World Scientific,
  2008.

\bibitem[Bolukbasi et~al.(2016)Bolukbasi, Chang, Zou, Saligrama, and
  Kalai]{bolukbasi2016man}
Tolga Bolukbasi, Kai-Wei Chang, James~Y Zou, Venkatesh Saligrama, and Adam~T
  Kalai.
\newblock Man is to computer programmer as woman is to homemaker? debiasing
  word embeddings.
\newblock In \emph{Advances in Neural Information Processing Systems}, pages
  4349--4357, 2016.

\bibitem[Davies(2015)]{coha}
Mark Davies.
\newblock Corpus of historical american english (coha).
\newblock 2015.
\newblock \doi{10.7910/DVN/8SRSYK}.
\newblock URL \url{https://doi.org/10.7910/DVN/8SRSYK}.

\bibitem[Dorow and Widdows(2003)]{dorow2003discovering}
Beate Dorow and Dominic Widdows.
\newblock Discovering corpus-specific word senses.
\newblock In \emph{Proceedings of the tenth conference on European chapter of
  the Association for Computational Linguistics (EACL)}, pages 79--82, 2003.

\bibitem[Firth(1957)]{firth1957synopsis}
John~R Firth.
\newblock A synopsis of linguistic theory, 1930-1955.
\newblock \emph{Studies in linguistic analysis}, 1957.

\bibitem[Gal{\'a}ntai and Heged{\H{u}}s(2006)]{galantai2006jordan}
Aur{\'e}l Gal{\'a}ntai and CS~J Heged{\H{u}}s.
\newblock Jordan's principal angles in complex vector spaces.
\newblock \emph{Numerical Linear Algebra with Applications}, 13\penalty0
  (7):\penalty0 589--598, 2006.

\bibitem[Gulordava and Baroni(2011)]{gulordava2011distributional}
Kristina Gulordava and Marco Baroni.
\newblock A distributional similarity approach to the detection of semantic
  change in the google books ngram corpus.
\newblock In \emph{Proceedings of the GEMS 2011 Workshop on GEometrical Models
  of Natural Language Semantics}, pages 67--71, 2011.

\bibitem[Hamilton et~al.(2016{\natexlab{a}})Hamilton, Clark, Leskovec, and
  Jurafsky]{hamiltonclj}
William~L Hamilton, Kevin Clark, Jure Leskovec, and Dan Jurafsky.
\newblock Inducing domain-specific sentiment lexicons from unlabeled corpora.
\newblock In \emph{Proceedings of the 2016 Conference on Empirical Methods in
  Natural Language Processing (EMNLP)}, pages 595--605, 2016{\natexlab{a}}.

\bibitem[Hamilton et~al.(2016{\natexlab{b}})Hamilton, Leskovec, and
  Jurafsky]{HamiltonLJ16a}
William~L Hamilton, Jure Leskovec, and Dan Jurafsky.
\newblock Cultural shift or linguistic drift? comparing two computational
  measures of semantic change.
\newblock In \emph{Proceedings of the 2016 Conference on Empirical Methods in
  Natural Language Processing (EMNLP)}, pages 2116--2121, 2016{\natexlab{b}}.

\bibitem[Hamilton et~al.(2016{\natexlab{c}})Hamilton, Leskovec, and
  Jurafsky]{hamilton2016diachronic}
William~L Hamilton, Jure Leskovec, and Dan Jurafsky.
\newblock Diachronic word embeddings reveal statistical laws of semantic
  change.
\newblock In \emph{Proceedings of the 54th Annual Meeting of the Association
  for Computational Linguistics (ACL)}, volume~1, pages 1489--1501,
  2016{\natexlab{c}}.

\bibitem[Hamilton et~al.(2017)Hamilton, Ying, and
  Leskovec]{hamilton2017inductive}
William~L Hamilton, Zhitao Ying, and Jure Leskovec.
\newblock Inductive representation learning on large graphs.
\newblock In \emph{Advances in Neural Information Processing Systems (NIPS)},
  pages 1025--1035, 2017.

\bibitem[Harris(1954)]{harris1954distributional}
Zellig~S Harris.
\newblock Distributional structure.
\newblock \emph{Word}, 10\penalty0 (2-3):\penalty0 146--162, 1954.

\bibitem[Jatowt and Duh(2014)]{jatowt2014framework}
Adam Jatowt and Kevin Duh.
\newblock A framework for analyzing semantic change of words across time.
\newblock In \emph{Digital Libraries (JCDL), 2014 IEEE/ACM Joint Conference
  on}, pages 229--238, 2014.

\bibitem[Jiang et~al.(2018)Jiang, Zhou, Leung, Li, and
  Fei-Fei]{jiang2018mentornet}
Lu~Jiang, Zhengyuan Zhou, Thomas Leung, Li-Jia Li, and Li~Fei-Fei.
\newblock Mentornet: Learning data-driven curriculum for very deep neural
  networks on corrupted labels.
\newblock In \emph{International Conference on Machine Learning}, pages
  2309--2318, 2018.

\bibitem[Kim et~al.(2014)Kim, Chiu, Hanaki, Hegde, and Petrov]{kim2014temporal}
Yoon Kim, Yi-I Chiu, Kentaro Hanaki, Darshan Hegde, and Slav Petrov.
\newblock Temporal analysis of language through neural language models.
\newblock \emph{Proceedings of the 52nd Annual Meeting of the Association for
  Computational Linguistics (ACL)}, page~61, 2014.

\bibitem[Kutuzov et~al.(2017{\natexlab{a}})Kutuzov, Velldal, and
  {\O}vrelid]{kutuzov2017temporal}
Andrey Kutuzov, Erik Velldal, and Lilja {\O}vrelid.
\newblock Temporal dynamics of semantic relations in word embeddings: an
  application to predicting armed conflict participants.
\newblock In \emph{Proceedings of the 2017 Conference on Empirical Methods in
  Natural Language Processing (EMNLP)}, pages 1824--1829, 2017{\natexlab{a}}.

\bibitem[Kutuzov et~al.(2017{\natexlab{b}})Kutuzov, Velldal, and
  {\O}vrelid]{kutuzov2017tracing}
Andrey Kutuzov, Erik Velldal, and Lilja {\O}vrelid.
\newblock Tracing armed conflicts with diachronic word embedding models.
\newblock In \emph{Proceedings of the Events and Stories in the News Workshop,
  Association for Computational Linguistics (ACL)}, pages 31--36,
  2017{\natexlab{b}}.

\bibitem[Lau et~al.(2012)Lau, Cook, McCarthy, Newman, and Baldwin]{lau2012word}
Jey~Han Lau, Paul Cook, Diana McCarthy, David Newman, and Timothy Baldwin.
\newblock Word sense induction for novel sense detection.
\newblock In \emph{Proceedings of the 13th Conference of the European chapter
  of the Association for Computational Linguistics (EACL)}, pages 591--601,
  2012.

\bibitem[Lin et~al.(2012)Lin, Michel, Aiden, Orwant, Brockman, and
  Petrov]{lin2012syntactic}
Yuri Lin, Jean-Baptiste Michel, Erez~Lieberman Aiden, Jon Orwant, Will
  Brockman, and Slav Petrov.
\newblock Syntactic annotations for the google books ngram corpus.
\newblock In \emph{Proceedings of the Association for Computational Linguistics
  (ACL) 2012 System Demonstrations}, pages 169--174, 2012.

\bibitem[Marsaglia et~al.(1972)]{marsaglia1972choosing}
George Marsaglia et~al.
\newblock Choosing a point from the surface of a sphere.
\newblock \emph{The Annals of Mathematical Statistics}, 43\penalty0
  (2):\penalty0 645--646, 1972.

\bibitem[Matrix(2015)]{redditData}
Stuck In~The Matrix.
\newblock Reddit public comments (2007-10 through 2015-05).
\newblock 2015.
\newblock URL
  \url{https://www.reddit.com/r/datasets/comments/3bxlg7/i_have_every_publicly_available_reddit_comment/}.

\bibitem[Mikolov et~al.(2013{\natexlab{a}})Mikolov, Le, and
  Sutskever]{mikolov2013translation}
Tomas Mikolov, Quoc~V. Le, and Ilya Sutskever.
\newblock Exploiting similarities among languages for machine translation.
\newblock \emph{CoRR}, abs/1309.4168, 2013{\natexlab{a}}.
\newblock URL \url{http://arxiv.org/abs/1309.4168}.

\bibitem[Mikolov et~al.(2013{\natexlab{b}})Mikolov, Sutskever, Chen, Corrado,
  and Dean]{mikolov2013distributed}
Tomas Mikolov, Ilya Sutskever, Kai Chen, Greg~S Corrado, and Jeff Dean.
\newblock Distributed representations of words and phrases and their
  compositionality.
\newblock In \emph{Advances in Neural Information Processing Systems (NIPS)},
  pages 3111--3119, 2013{\natexlab{b}}.

\bibitem[Mitra et~al.(2014)Mitra, Mitra, Riedl, Biemann, Mukherjee, and
  Goyal]{mitra2014s}
Sunny Mitra, Ritwik Mitra, Martin Riedl, Chris Biemann, Animesh Mukherjee, and
  Pawan Goyal.
\newblock That's sick dude!: Automatic identification of word sense change
  across different timescales.
\newblock In \emph{Proceedings of the 52nd Annual Meeting of the Association
  for Computational Linguistics (ACL)}, pages 1020--1029, 2014.

\bibitem[Pagel et~al.(2007)Pagel, Atkinson, and Meade]{pagel2007frequency}
Mark Pagel, Quentin~D Atkinson, and Andrew Meade.
\newblock Frequency of word-use predicts rates of lexical evolution throughout
  indo-european history.
\newblock \emph{Nature}, 449\penalty0 (7163):\penalty0 717, 2007.

\bibitem[Pechenick et~al.(2015)Pechenick, Danforth, and
  Dodds]{pechenick2015characterizing}
Eitan~Adam Pechenick, Christopher~M Danforth, and Peter~Sheridan Dodds.
\newblock Characterizing the google books corpus: Strong limits to inferences
  of socio-cultural and linguistic evolution.
\newblock \emph{PloS one}, 10\penalty0 (10):\penalty0 e0137041, 2015.

\bibitem[Reali and Griffiths(2010)]{reali2010words}
Florencia Reali and Thomas~L Griffiths.
\newblock Words as alleles: connecting language evolution with bayesian
  learners to models of genetic drift.
\newblock \emph{Proceedings of the Royal Society of London B: Biological
  Sciences}, 277\penalty0 (1680):\penalty0 429--436, 2010.

\bibitem[Rudin(1987)]{Rudin}
Walter Rudin.
\newblock \emph{Real and Complex Analysis, 3rd Ed.}
\newblock McGraw-Hill, Inc., New York, NY, USA, 1987.
\newblock ISBN 0070542341.

\bibitem[Rudolph and Blei(2018)]{RudolphB18}
Maja~R Rudolph and David~M. Blei.
\newblock Dynamic embeddings for language evolution.
\newblock In \emph{Proceedings of the 2018 International Conference on World
  Wide Web (WWW)}, pages 1003--1011, 2018.

\bibitem[Sagi et~al.()Sagi, Kaufmann, and Clark]{sagi2011tracing}
Eyal Sagi, Stefan Kaufmann, and Brady Clark.
\newblock Tracing semantic change with latent semantic analysis.
\newblock \emph{Current methods in historical semantics}, pages 161--183.

\bibitem[Sch{\"o}nemann(1966)]{schonemann1966generalized}
Peter~H Sch{\"o}nemann.
\newblock A generalized solution of the orthogonal procrustes problem.
\newblock \emph{Psychometrika}, 31\penalty0 (1):\penalty0 1--10, 1966.

\bibitem[Smith et~al.(2017)Smith, Turban, Hamblin, and
  Hammerla]{smith2017offline}
Samuel~L Smith, David~HP Turban, Steven Hamblin, and Nils~Y Hammerla.
\newblock Offline bilingual word vectors, orthogonal transformations and the
  inverted softmax.
\newblock In \emph{International Conference on Learning Representations
  (ICLR)}, 2017.

\bibitem[Von~Luxburg(2007)]{von2007tutorial}
Ulrike Von~Luxburg.
\newblock A tutorial on spectral clustering.
\newblock \emph{Statistics and computing}, 17\penalty0 (4):\penalty0 395--416,
  2007.

\bibitem[Xu and Kemp(2015)]{Xu2015ACE}
Yang Xu and Charles Kemp.
\newblock A computational evaluation of two laws of semantic change.
\newblock In \emph{Cognitive Science Society (CogSci)}, 2015.

\bibitem[Yao et~al.(2018)Yao, Sun, Ding, Rao, and Xiong]{yao2017discovery}
Zijun Yao, Yifan Sun, Weicong Ding, Nikhil Rao, and Hui Xiong.
\newblock Dynamic word embeddings for evolving semantic discovery.
\newblock In \emph{Proceedings of the Eleventh ACM International Conference on
  Web Search and Data Mining (WSDM)}, pages 673--681, 2018.

\bibitem[Yin(2018)]{yin2018pairwise}
Zi~Yin.
\newblock Pairwise inner product distance: Metric for functionality, stability,
  dimensionality of vector embedding.
\newblock \emph{arXiv preprint arXiv:1803.00502}, 2018.

\bibitem[Yin and Shen(2018)]{yin2018dimensionality}
Zi~Yin and Yuanyuan Shen.
\newblock On the dimensionality of word embedding.
\newblock In \emph{Advances in Neural Information Processing Systems}, 2018.

\bibitem[Zhang et~al.(2013)Zhang, Wang, Zhou, and Ma]{zhang2013simultaneous}
Xiaoqin Zhang, Di~Wang, Zhengyuan Zhou, and Yi~Ma.
\newblock Simultaneous rectification and alignment via robust recovery of
  low-rank tensors.
\newblock In \emph{Advances in Neural Information Processing Systems}, pages
  1637--1645, 2013.

\bibitem[Zhang et~al.(2014)Zhang, Zhou, Wang, and Ma]{zhang2014hybrid}
Xiaoqin Zhang, Zhengyuan Zhou, Di~Wang, and Yi~Ma.
\newblock Hybrid singular value thresholding for tensor completion.
\newblock In \emph{Proceedings of the Twenty-Eighth AAAI Conference on
  Artificial Intelligence}, pages 1362--1368. AAAI Press, 2014.

\bibitem[Zhou et~al.(2018)Zhou, Athey, and Wager]{zhou2018policy}
Zhengyuan Zhou, Susan Athey, and Stefan Wager.
\newblock Offline multi-action policy learning: Generalization and
  optimization.
\newblock \emph{arXiv preprint arXiv:1810.04778}, 2018.

\end{thebibliography}
\newpage
\section{Appendix}

% \subsection{Complexity Analysis of the Alignment and Anchor Method}
% We can analyze the non-asymptotic computational complexity of the alignment and anchoring algorithms where $n$, the size of the common vocabulary, is assumed to be larger than $d^2$, the square of the embedding dimension, which is likely in the context of word embeddings. The alignment method incurs $2nd^2$ operations for matrix multiplication of $E^TF$ and $FQ$, $kd^3$ operations to compute $Q^*$ by SVD, and $3nd$ operations to compute the matrix difference and Frobenius norm. The constant $k$ depends on the algorithm used to compute the SVD of $E^TF$ but is no less than $4$ \citep{Golub1996} for general matrices. The anchoring method incurs $2nd^2$ operations for matrix multiplication of $EE^T$ and $FF^T$ and $3nd$ operations to compute the matrix difference and Frobenius norm. The anchoring method has non-asymptotic time complexity strictly less than the alignment method due to the additional $kd^3$ operations used by the latter method. While we assume a naive matrix multiplication implementation, the non-asymptotic runtime benefits of the anchoring method holds when more efficient matrix multiplication implementations are used for both methods. It should be noted that both methods have equivalent asymptotic time complexity of $\mathcal{O}(nd^2)$.

\subsection{Calculation of Alignment Operator: Orthogonal Procrustes Problem}
The objective of the aligment method in Section 3.1 is the Orthogonal Procrustes Problem \citep{schonemann1966generalized} which has an analytic solution. Let $SVD(E^TF) = U \Sigma V^T$:

\begin{align}
Q^* =& \arg \min_{Q\in O(d)}\|E-FQ\|^2 \notag \\
    =& \arg \min_{Q\in O(d)} \|E\|^2 + \|FQ\|^2 - 2\langle E, FQ\rangle  \notag  \\
    = & \arg \max_{Q\in O(d)} \langle E^{T}F, Q \rangle  \notag  \\
    = & VU^T \notag
\end{align}

% \begin{figure}[htb]
% \centering
% \captionsetup{justification=centering}
% \includegraphics[width=0.8\textwidth]{alignment.png}
% \caption{Geometric interpretation of Alignment Method}
% \label{fig:pip_loss_arxiv}
% \end{figure}

Using the above analytic solution, we can calculate an alignment operator for a pair of word embeddings. For cases where performing SVD is not computationally feasible, gradient descent with projection onto the orthogonal group may be performed
\citep{yao2017discovery}. 

\subsection{Effect of Embedding Dimensionality on Equivalence Constant}
We conducted more experiments to study the effect of dimensionality on the equivalence relation between the global anchor method and the alignment method. The results are summarized in Table \ref{table:ratio2}. We found the constant factor is always around 0.8 with little variation, regardless of the dimensions. The Google Books and arXiv experiments used embeddings of dimensions 300 and 50, respectively. 
% \captionof{table}{Ratio between Distance by Alignment and Anchor Methods} \label{table:ratio}
% \begin{center}
% \begin{tabular}{ |c|c|c | c| c|c||c|c|c|c|c|}
%  \hline
%  & dimension & mean & std & min. & max. & dimension & mean & std & min. & max.\\
%   \hline
% alignment/anchor &100 & 0.797 & 0.018 & 0.754 & 0.821 &400 & 0.814 & 0.019 & 0.753 & 0.849 \\
% \hline
% alignment/anchor &200 & 0.811 & 0.015 & 0.779 & 0.838 &500 & 0.809 & 0.024 & 0.737 & 0.847\\
% \hline
% %alignment/anchor &300 & 0.826 & 0.015 & 0.774 & 0.855 & $\sqrt 2/2\approx 0.707$  & 1\\
% %\hline
% \end{tabular}
% \end{center} 
\captionof{table}{Ratio between Distance by Alignment and Anchor Methods} \label{table:ratio2}
\begin{center}
\begin{tabular}{ |c|c|c | c| c|c|}
 \hline
 & dimension & mean & std & min. & max. \\
  \hline
alignment/anchor &100 & 0.797 & 0.018 & 0.754 & 0.821 \\
\hline
alignment/anchor &200 & 0.811 & 0.015 & 0.779 & 0.838\\
\hline
alignment/anchor &400 & 0.814 & 0.019 & 0.753 & 0.849 \\
\hline
alignment/anchor &500 & 0.809 & 0.024 & 0.737 & 0.847\\
\hline
%alignment/anchor &300 & 0.826 & 0.015 & 0.774 & 0.855 & $\sqrt 2/2\approx 0.707$  & 1\\
%\hline
\end{tabular}
\end{center}

\subsection{Construction of Vector Space Embeddings}

In constructing our word embeddings on corpora from our experiments, we use the Word2Vec model class in the Gensim Python package with hyperparameters similar to those in \citep{mikolov2013distributed}.

\begin{itemize}
\item \textbf{Google Books}: Skip-gram with Negative Sampling; Dimensionality: 300, Window size: 5, Minimum word frequency: 100, Negative samples: 4, Iterations: 15 epochs.
\item \textbf{arXiv:} Skip-gram with Negative Sampling; Dimensionsionality: 50, Window size: 5 , Minimum word frequency: 3, Negative samples: 3, Iterations: 5 epochs, Random seed: 1.
\item \textbf{Reddit:} (Same as arXiv)
\item \textbf{COHA:} Skip-gram with Negative Sampling; Dimensionality: 100, Window size: 5, Minimum word frequency: 10, Negative samples: 4, Iterations: 15 epochs.
\end{itemize}

\subsection{Corpora Statistics}
We provide some high level statistics of the corpora used in our two experiments. 

\begin{itemize}
\item \textbf{Google Books}: Number of n-grams range from a few million to a few tens of millions depending on the year.  Number of years: 101, Size of anchoring set (common vocabulary): 13,000.
\item \textbf{arXiv:} Number of categories: 50, Category average corpus size (in words): 10,932,027, Category average number of unique tokens: 32,572, Size of anchoring set (common vocabulary): 2,969
\item \textbf{Reddit:} Number of categories: 50, Category average corpus size (in words): 11,487,859, Category average number of unique tokens: 81,717, Size of anchoring set (common vocabulary): 10,158
\item \textbf{COHA:} Number of n-grams range from ~14 million to ~20 million depending on the decade, for more information please see \href{https://www.ngrams.info/COHAdecadeSize.txt}{COHA corpora sizes}. Number of decades: 18 (1810 and 1820 corpora omitted), Size of anchoring set (common vocabulary): 7,973.
\end{itemize}

\subsection{Further arXiv Experimentation}

In this section, we include the anchoring loss matrix for the arXiv academic corpora as well as word-level linguistic variations in and between various academic fields.

\subsubsection{Anchoring Loss Matrix for arXiv Experiments}

Figure ~\ref{fig:pip_loss_arxiv} shows the anchoring loss for pairs of 50 different academic categories from the fields of computer science, mathematics, and physics on arXiv. Categories within the same field are grouped together in the figure since the ordering of the categories is alphabetical. As expected, we notice a block structure which signifies that categories are typically most similar to others in the same field.

\begin{figure}[htb]
\centering
\captionsetup{justification=centering}
\includegraphics[width=\textwidth]{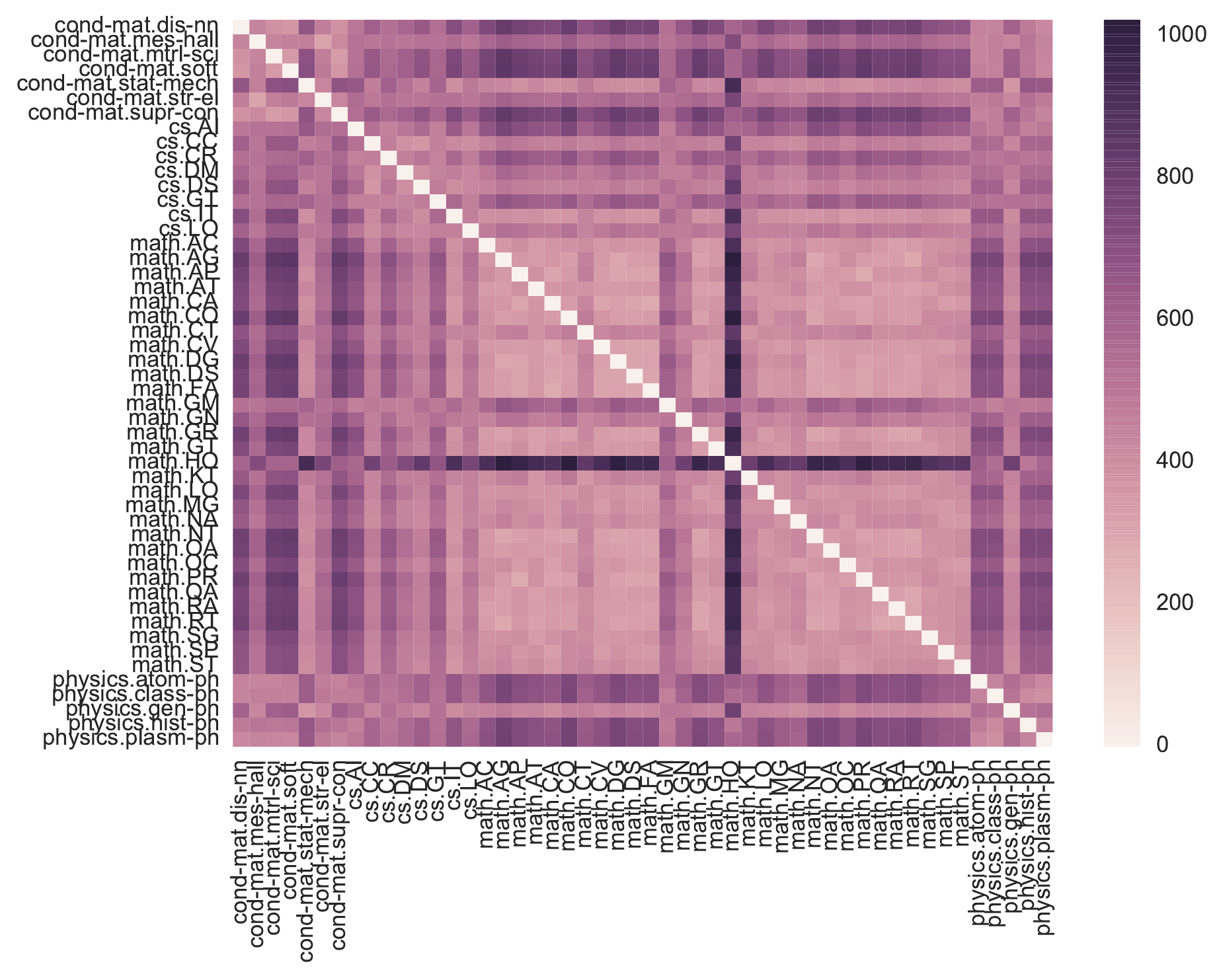}
\caption{Anchor Difference across arXiv categories}
\label{fig:pip_loss_arxiv}
\end{figure}

Through further observation of the anchoring distance matrix, we find interesting linguistic patterns associated with academic writing in the various communities:

\begin{itemize}
\item Among the different fields, academic writing is most similar across different categories within the field of Mathematics (MA). We find that words commonly used in mathematical proofs, such as 'recall', 'suppose', 'contradiction', 'particular', 'assume', are used in very similar contexts across the different MA categories as measured by the anchoring losses for these individual words. This observation does not hold when comparing the anchoring loss of these words in different fields such as Physics or Computer Science. Given the rigorous structure of language used in mathematical proofs, we find this observation matches our intuition. 
\item Certain categories in the field of Computer Science (CS), such as Information Theory (cs.IT), uses language which is more similar to MA categories than other CS categories as measured by the anchoring metric. Data structures and Algorithms (cs.DS) and Computational Complexity (cs.CC) are examples of other CS categories whose academic writing more closely resembles language used in MA categories. Interestingly, these three categories are often associated with Theoretical Computer Science, a Computer Science subfield which has historically had strong ties to Mathematical communities.
\item Academic language in the Statistical Mechanics (cond-mat.stat-mech) category is more similar to communities in MA and CS than other categories in the field of Condensed Matter Physics (cond-mat). This observation can likely be attributed to the use of Statistical Mechanics in many mathematical settings such as uncertainty propagation and potential games as well computer science settings such as theoretical understanding of Neural Networks dynamics. 
\item Categories such as History and Overview of Mathematics (math.HO) and General Mathematics (math.GM) are anomalous in that the anchoring loss of these communities is large when compared to embeddings corresponding to almost all other categories. After further consideration, we find that the History and Overview of Mathematics category is less focused on disseminating novel technical results than the other categories used in the experiments. Rather, the History of Mathematics category focuses on biographies, philosophy of mathematics, mathematics education, recreational mathematics, and communication of mathematics. General Mathematics is a placeholder category which is used when authors cannot find an appropriate MA category. Since General Mathematics focuses on topics not covered elsewhere, it makes sense that the language used in the category might be unrelated to the other MA categories.
\end{itemize}

\subsubsection{Word-Level linguistic varations between arXiv communities}

In Table~\ref{table:sim_word_field}, we show the words with the lowest anchor difference when aggregated over all pairs of subjects within the same academic field. These words are used most similarly across all subjects within the respective field. Note that we do not include stopwords, as defined in the NLTK library, in this analysis.

\captionof{table}{Word with most similar usage in a particular academic field} \label{table:sim_word_field}
\begin{center}
\begin{tabular}{|c|c|c|c|}
 \hline
 \textbf{Cond-Mat} & \textbf{CS} & \textbf{Math} & \textbf{Physics}  \\
  \hline
 normal & integrals & resolving & entries \\
\hline
 power & circular & spread & triple \\
\hline
 black & mixing & virtually & dimensional \\
\hline
 study & notes & contract & relation \\
\hline
 letters & tail & soft & direct \\
\hline
 contract & print & backward & quasi \\
\hline
 sharing & rigid & design & convert \\
\hline
 notes & post & status & rigid \\
\hline
 polynomials & rough & shaped & functional \\
\hline
plain & standard & maximally & hidden \\
\hline
\end{tabular}
\end{center}

% In Table~\ref{fig:pip_loss_coha} below, we show the words with the greatest anchor difference when aggregated over pairs of a subjects containing the particular academic field and a subject in a different field. These words are used differently in the respective field when compared to all other academic fields. Note that we do not include stopwords, as defined in the NLTK library, in this analysis.

% \captionof{table}{Word with most distinct usage in a particular academic field} \label{table:diff_word_field}
% \begin{center}
% \begin{tabular}{|c|c|c|c|}
%  \hline
%  cond-mat/other & cs/other & math/other & physics/other  \\
%   \hline
%  100 & could & 0.018 & 0.754 \\
% \hline
%  200 & In & 0.015 & 0.779 \\
% \hline
%  400 & An & 0.019 & 0.753 \\
% \hline
%  500 & To & 0.024 & 0.737 \\
% \hline
%  100 & These & 0.018 & 0.754 \\
% \hline
%  200 & might & 0.015 & 0.779 \\
% \hline
%  400 & may & 0.019 & 0.753 \\
% \hline
%  500 & cannot & 0.024 & 0.737 \\
% \hline
%  500 & We & 0.024 & 0.737 \\
% \hline
% 500 & However & 0.024 & 0.737 \\
% \hline
% \end{tabular}
% \end{center}

\subsection{COHA Experiments}
We conduct experiments on the Corpus of Historical American English (COHA) dataset \citep{coha} which contains balanced fiction and non-fiction texts from 1810 to 2009. We downloaded four-gram corpora, grouped and trained embeddings by decade. Texts from the decades following 1810 and 1820 were omitted as these corpora were significantly smaller than others present in the dataset. The anchor distance matrix and Laplacian Embeddings were subsequently calculated on these embeddings.

\subsubsection{Anchoring Loss Matrix for COHA Experiments}
Figure~\ref{fig:pip_loss_coha} shows the anchoring loss for pairs of the 18 decades used from the COHA dataset. As seen previously in the Google Books dataset, we notice a banded structure where anchor loss monotically increases as we move away from the diagonal - signifying the evolution of language over time. 

\begin{figure}[htb]
\centering
\captionsetup{justification=centering}
\includegraphics[width=0.70\textwidth]{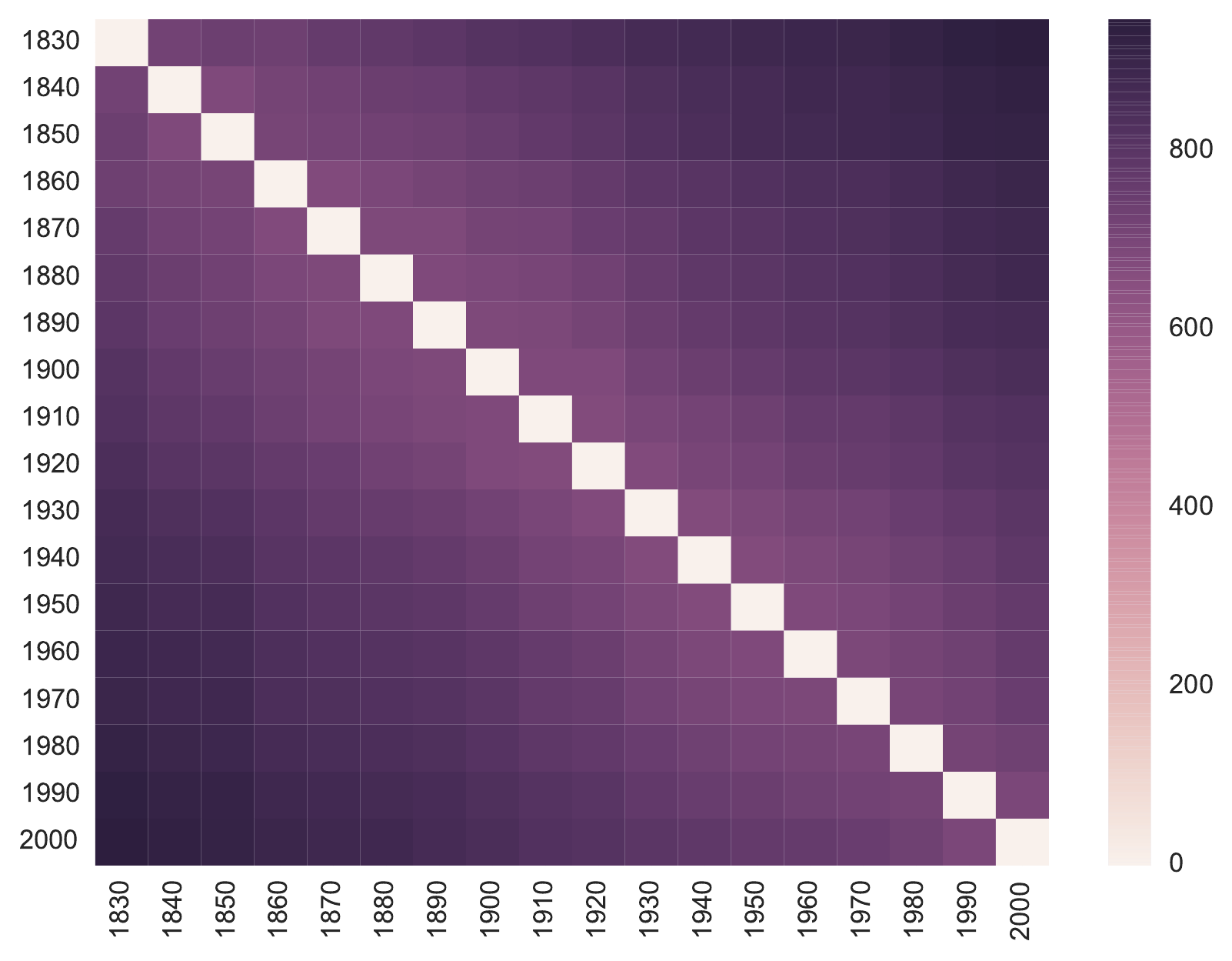}
\caption{Anchor Difference across decades (COHA)}
\label{fig:pip_loss_coha}
\end{figure}

\subsubsection{Laplacian Embedding for COHA Experiments}

In Figure \ref{fig:laplacian_coha}, we use the global anchor method to detect linguistic similarity of text from different decades within the COHA dataset. We applied the Laplacian Embedding, Algorithm \ref{alg:algo1}, to the anchor distance matrix, and obtained spectral embeddings for different decades.

\begin{figure}[htb]
\centering
\captionsetup{justification=centering}
\includegraphics[width=0.70\textwidth]{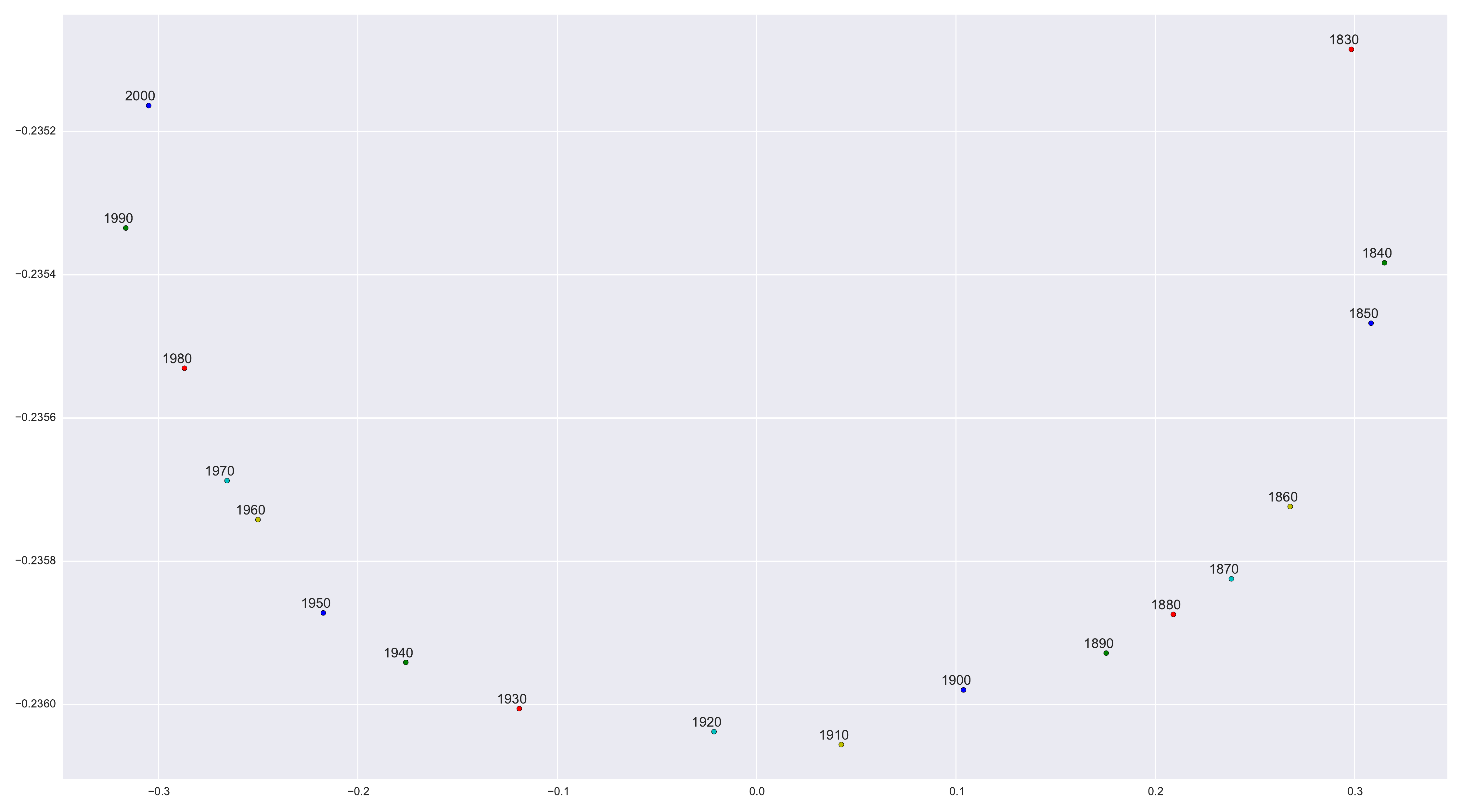}
\caption{Laplacian Embedding of Decades (COHA)}
\label{fig:laplacian_coha}
\end{figure}

\subsection{Reddit Experiments}
We conduct experiments on all 1.7 billion publicly available Reddit comments from October 2007 to May 2015 \citep{redditData}. These comments were grouped by community or subreddit to find community-based linguistic affinities. We downloaded the dataset and trained embeddings for the 50 subreddits with the largest corpus size. We then calculate the anchor distance matrix and the Laplacian Embedding.

\subsubsection{Anchoring Loss Matrix for Reddit Experiments}

\begin{figure}[htb]
\centering
\captionsetup{justification=centering}
\includegraphics[width=1.0\textwidth]{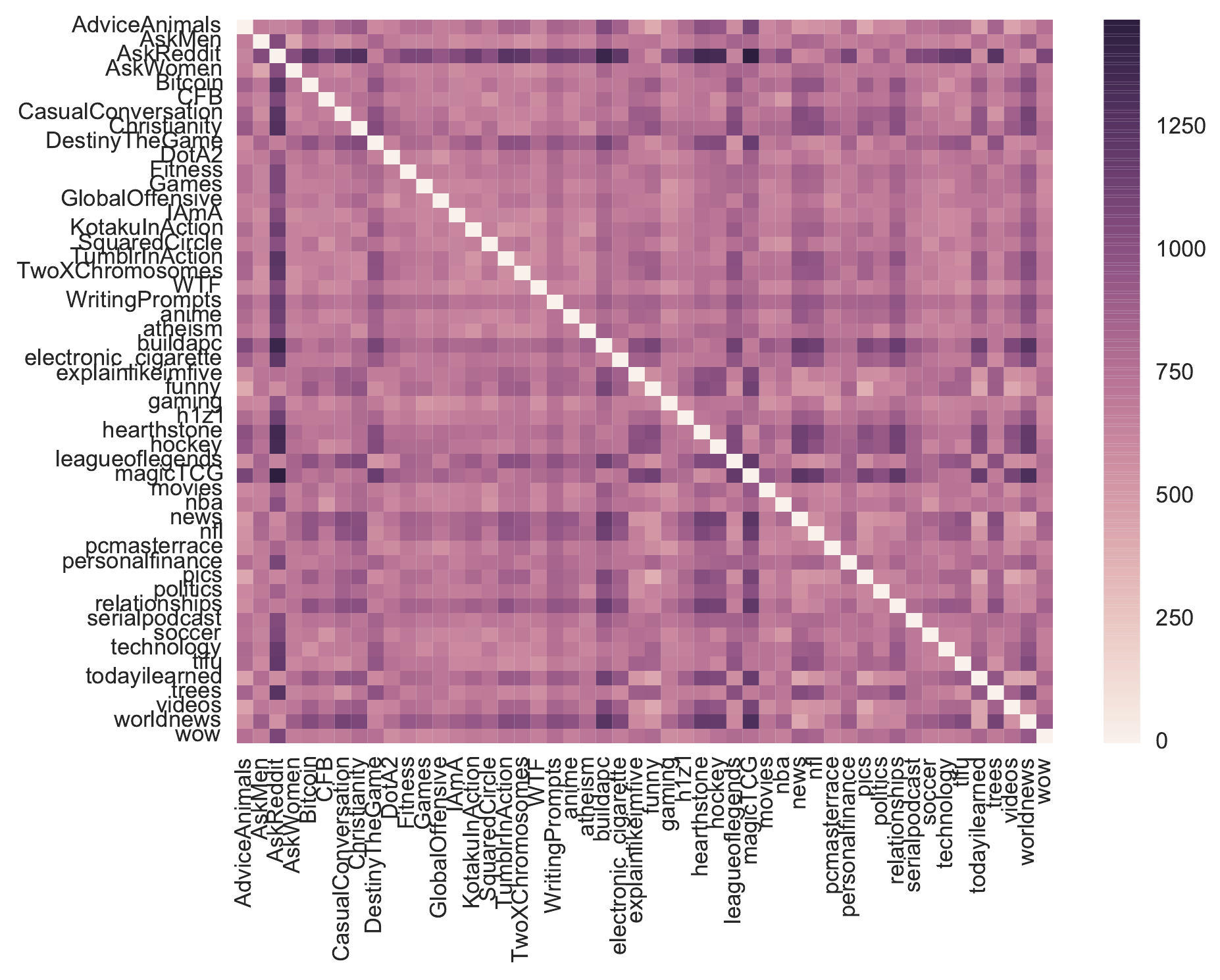}
\caption{Anchor Difference across subreddits}
\label{fig:pip_loss_reddit}
\end{figure}

Figure~\ref{fig:pip_loss_reddit} shows the anchoring loss for pairs of the 50 largest subreddits by corpora size. Due to the absence of hierarchy in subreddit structure, we notice a lack of discernible grouping in the distance matrix. \\

Through further observation of the anchoring distance matrix, we find interesting patterns associated with language used in various subreddits comments:

\begin{itemize}
\item Noticeably, the embeddings generated from the AskReddit subreddit corpus have high anchoring loss when compared with embeddings for most other subreddits. After inspection, we notice that the purpose of AskReddit is general question answering compared to most other subreddits which focus on reactionary conversation. Additionally, AskReddit contains a much broader range of discussion topics compared to the directed conversation of other subreddits.
\item We find that among the numerous subreddits devoted to discussion of particular games, further distinction can be observed from the anchoring loss matrix based on characteristics of the games themselves. For instance, the subreddit with language most similar to the magicTCG subreddit, as measured by anchoring loss, is hearthstone. Interestingly, these are the only two communities we analyzed that are dedicated to card games. On the other hand, embeddings from subreddits dedicated to discussion of role-playing games such as DestinyTheGame and leagueoflegends have low anchoring loss. By inspecting the anchoring loss matrix, we can find variations in language across subreddits dedicated to varying game genres. 
\item Additionally, we notice that communities focused on sharing or discussing primarily image or video rather than text posts can be revealed through the anchoring metric. In particular, the embeddings constructed from the videos, pics, and movies subreddits have low anchoring loss. This is potentially due to differences in the language used when discussing image or video content compared to text content. Furthermore, subreddits dedicated to memes, such as AdviceAnimals or funny, have similar language usage as measured by anchoring loss.
\item While subreddits inherently lack hierarchy, we can use anchoring loss to find communities with similar themes. In particular, we find that the CFB, hockey, nba, soccer, and SquaredCircle subreddits all exhibit similar language usage as measured by anchoring loss. These subreddits make up the five of the six most popular sports communities on Reddit and discuss College Football, Hockey, Professional Basketball, Soccer, and Wrestling, respectively.
\end{itemize}

\subsubsection{Laplacian Embedding for Reddit Experiments}

In Figure \ref{fig:laplacian_reddit}, we use the global anchor method to detect linguistic similarity of Reddit comments from different subreddits. We applied the Laplacian Embedding, Algorithm \ref{alg:algo1}, to the anchor distance matrix, and obtained spectral embeddings for different subreddits.

\newpage

\vspace*{1in}
\begin{figure}[t]
\centering
\captionsetup{justification=centering}
\includegraphics[width=0.95\textwidth]{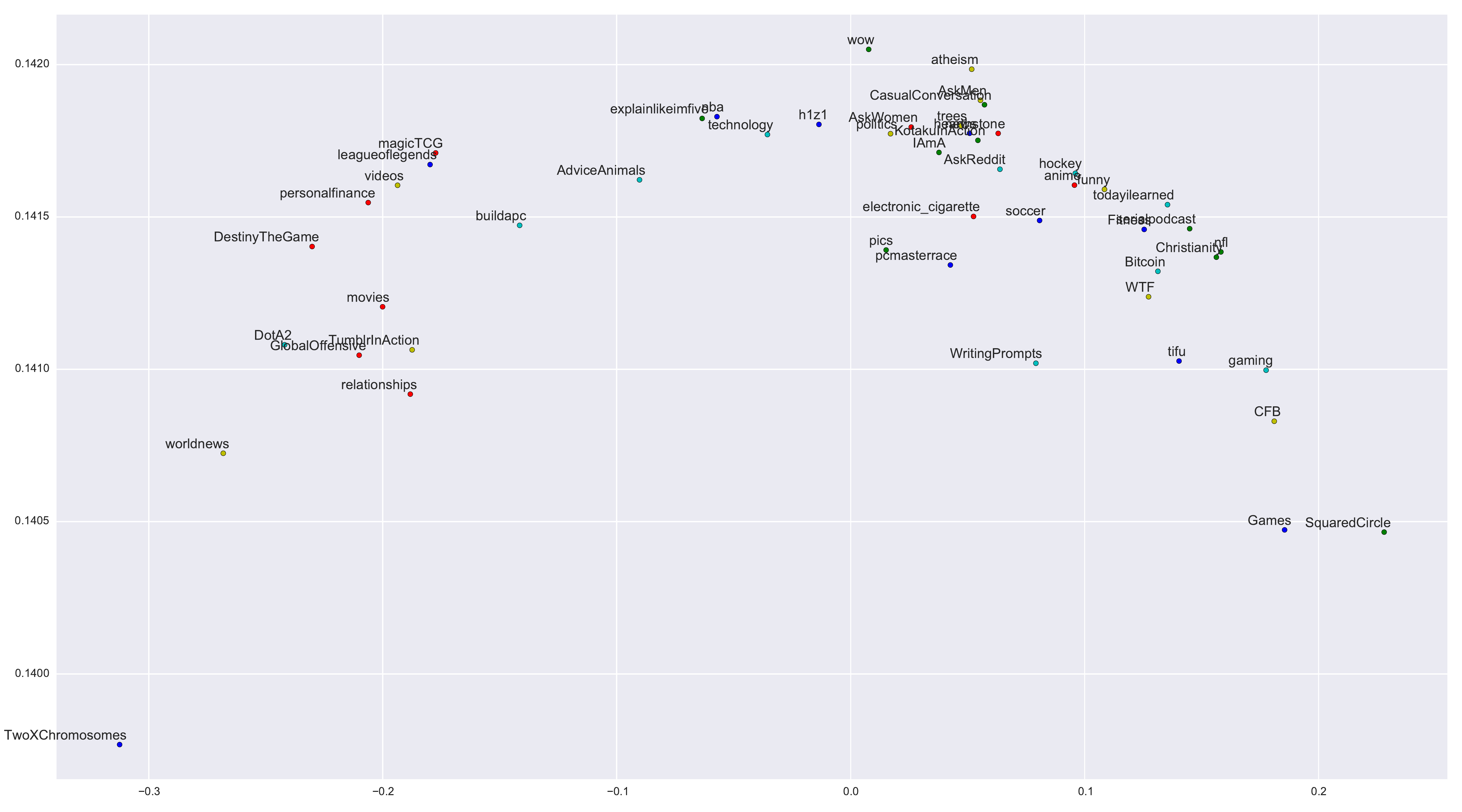}
\caption{Laplacian Embedding of subreddits}
\label{fig:laplacian_reddit}
\end{figure}

\end{document}